\documentclass{article}

\usepackage[preprint]{neurips_2026}

\usepackage[utf8]{inputenc} 
\usepackage[T1]{fontenc}    
\usepackage{hyperref}       
\usepackage{url}            
\usepackage{booktabs}       
\usepackage{amsfonts}       
\usepackage{nicefrac}       
\usepackage{microtype}      
\usepackage{xcolor}         
\usepackage{amsmath}
\usepackage{amssymb}
\usepackage{mathtools}
\usepackage{amsthm}
\usepackage{subcaption}
\usepackage{comment}
\usepackage{microtype}
\usepackage{graphicx}
\usepackage{booktabs} 
\usepackage{natbib}
\usepackage{algorithm}
\usepackage{algpseudocode}
\usepackage{hyperref}
\theoremstyle{plain}
\newtheorem{theorem}{Theorem}[section]
\newtheorem{proposition}[theorem]{Proposition}

\theoremstyle{definition}

\theoremstyle{remark}

\usepackage[textsize=tiny]{todonotes}
\usepackage{float}

\usepackage{hyperref}
\usepackage{perpage} \MakePerPage{footnote}
\usepackage{enumitem}
\usepackage{float}

\setlist[itemize]{leftmargin=4mm}
\setlist[enumerate]{leftmargin=4mm}

\title{Online Localized Conformal Prediction}

\author{
  Yuheng Lai\\
  University of Wisconsin - Madison\\
  \texttt{yuheng.lai@wisc.edu}
    \And
  Garvesh Raskutti\\
  University of Wisconsin - Madison\\
  \texttt{raskutti@wisc.edu}  
}

\begin{document}

\maketitle

\begin{abstract}
Conformal prediction is a framework that provides valid uncertainty quantification for general models with exchangeable data. However, in the online learning and time-series settings, exchangeability is not satisfied. Existing online conformal methods, such as adaptive conformal inference (ACI), can achieve long-run validity, yet they remain inefficient under covariate heterogeneity because they rely on global calibration. We propose \emph{Online Localized Conformal Prediction (OLCP)}, which combines online adaptation with covariate-dependent localization to better reflect heterogeneity. To reduce sensitivity to the localization bandwidth, we further develop \emph{OLCP-Hedge}, which performs bandwidth selection as an online expert aggregation problem using a constrained online convex optimization framework. Importantly, we provide coverage guarantees for both algorithms and demonstrate through simulations and real-data experiments that the proposed methods attain valid long-run coverage with narrower prediction sets than existing baselines.
\end{abstract}
\vspace{-10pt}

\section{Introduction}
\vspace{-7pt}
Reliable uncertainty quantification is essential in online learning and time-series prediction, where data are often temporally dependent, nonstationary, and heterogeneous across covariate space. Our goal is to construct online prediction set \(C_t(X_t)\subseteq\mathcal{Y}\) for a sequential data stream \(\{(X_t,Y_t)\}_{t=1}^T\) such that the long-run coverage target
\vspace{-3pt}
\begin{equation}
\label{eq:coverage_guarantee}
\frac{1}{T}\sum_{t=1}^T \mathbf{1}\{Y_t \in C_t(X_t)\} \approx 1-\alpha
\end{equation}
\vspace{-2pt}
is achieved while the average set size remains as small as possible.

Conformal prediction (CP) provides distribution-free uncertainty quantification with finite-sample marginal coverage under exchangeability \citep{vovk2005algorithmic, lei2018distribution, angelopoulos2023conformal}. However, exchangeability is often violated in online and time-series settings due to temporal dependence, distribution shift, heteroskedasticity, and structural breaks. Existing online conformal methods such as adaptive conformal inference (ACI) \citep{gibbs2021adaptive} and dynamically tuned ACI (DtACI) \citep{gibbs2024conformal} address this issue by adapting the nominal miscoverage level over time, and can achieve long-run validity under non-exchangeability. That being said, these methods remain \emph{globally} calibrated: they react to temporal changes in overall uncertainty, but do not adapt set size to local covariate-dependent heterogeneity. As a result, they can be simultaneously too wide in easy regions and too narrow in difficult regions of the covariate space.

A complementary line of work addresses heterogeneity through localization. In particular, localized conformal prediction (LCP) re-weights calibration points according to covariate similarity, allowing set size to vary with local uncertainty \citep{guan2023localized}. While attractive in heterogeneous regression problems, existing localized conformal methods rely on exchangeability and are not designed for online nonstationary data streams. This leaves a natural gap: current online conformal methods handle temporal non-exchangeability but remain global, whereas current localized conformal methods adapt to heterogeneity but do not handle non-exchangeable online data.

We propose \emph{Online Localized Conformal Prediction} (OLCP), which combines online calibration with covariate-dependent localization. At each time \(t\), OLCP computes a localized conformal quantile around the current covariate \(X_t\), and then updates the nominal miscoverage level using realized coverage feedback. Thus the set size adapts both over time and across the covariate space. A key practical issue is bandwidth selection. To reduce sensitivity to this choice, we formulate bandwidth selection as an online expert aggregation problem with prediction set size as the objective and coverage as the constraint, and develop \emph{OLCP-Hedge}, a constrained online convex optimization procedure over a collection of OLCP experts.

Our main contributions are as follows:
\vspace{-5pt}
\begin{itemize}
    \item We introduce OLCP, a localized online conformal method that combines covariate-dependent calibration with feedback-driven online adaptation.
    \item We show that OLCP enjoys a long-run coverage guarantee under its sequential coverage-tracking update, despite replacing global conformal quantiles with localized ones.
    \item We formulate OLCP bandwidth selection as a constrained online convex optimization problem and propose OLCP-Hedge, which controls long-run coverage violation while competing with the best feasible bandwidth expert in terms of set size. Both coverage and set size guarantees for OLCP-Hedge are provided.
    \item Through simulations and real-data experiments, we show that OLCP and OLCP-Hedge achieve valid long-run coverage with narrower prediction sets than existing online conformal baselines in heterogeneous and nonstationary settings.
\end{itemize}

\vspace{-8pt}
\section{Problem setup and related work}
\label{sec:related_work}
\vspace{-7pt}
We observe a sequential stream \(\{(X_t,Y_t)\}_{t=1}^T\), where \(X_t \in \mathcal X\) is observed before prediction and \(Y_t \in \mathcal Y\) is revealed afterward. At each time \(t\), using the past data together with the current covariate \(X_t\), we construct a prediction set
\[
C_t(X_t) \subseteq \mathcal Y
\]
for \(Y_t\). Our objective is to achieve the target long-run coverage level \eqref{eq:coverage_guarantee}
while keeping the prediction sets as small as possible. In contrast to the classical conformal setting, we do not assume exchangeability of \(\{(X_t,Y_t)\}\); the sequence may be temporally dependent, nonstationary, or heterogeneous across the covariate space.

\vspace{-3pt}
\paragraph{Conformal prediction (CP).}
We first introduce the split conformal prediction framework \citep{vovk2005algorithmic, angelopoulos2023conformal}. One uses an \emph{independent} training sample to construct a score function
\(
s:\mathcal X \times \mathcal Y \to \mathbb R,
\)
where larger values indicate that a candidate pair \((x,y)\) is less conforming to the training data; for example, one may take \(s(x,y)=|y-\hat f(x)|\), where \(\hat f\) is a pretrained predictor. 

Given calibration points \(Z_i=(X_i,Y_i)\), \(i=1,\dots,n\), define the calibration scores \(S_i := s(X_i,Y_i)\) and the empirical score distribution

\vspace{-15pt}
\[
\widehat F_n := \frac{1}{n}\sum_{i=1}^n \delta_{S_i}, \qquad \text{where } \delta_s \text{ is a point mass at } s \in \mathbb{R}.
\]
\vspace{-5pt}
For any \(\tau \in [0,1]\), let
\[
Q(\tau;F) := \inf\{t \in \mathbb R \cup \{\infty\}: F((-\infty,t]) \ge \tau\}
\]
denote the lower \(\tau\)-quantile of a distribution \(F\). Split conformal prediction calibrates the score threshold by
\[ \widehat q_{1-\alpha} := Q\!\left(\frac{\lceil (n+1)(1-\alpha)\rceil}{n};\,\widehat F_n\right), \]

\vspace{-10pt}
and outputs the prediction set
\[
C^{\text{CP}}(X_{n+1}) = \{y \in \mathcal Y : s(X_{n+1},y) \le \widehat q_{1-\alpha}\}.
\]
Under exchangeability of the calibration points and the test point, this construction yields finite-sample marginal coverage,
\[
\mathbb P\{Y_{n+1} \in C^{\text{CP}}(X_{n+1})\} \ge 1-\alpha,
\]
while allowing complete flexibility in the choice of score \(s\) \citep{lei2018distribution, romano2019conformalized}.

\paragraph{Localized conformal prediction (LCP).}
A natural extension of split conformal prediction is to replace the unweighted empirical score distribution with a weighted one, assigning larger weight to calibration points that are more relevant to the test point. This idea underlies weighted conformal methods for covariate shift and related distribution shifts \citep{hore2025conformal, barber2023conformal}. Localized conformal prediction (LCP) specializes this weighting scheme to the covariate space, emphasizing calibration points whose features are close to the test covariate \(X_{n+1}\) \citep{guan2023localized}. Specifically, LCP introduces a localizer
\(
H:\mathcal X \times \mathcal X \to [0,\infty),
\)
and defines normalized local weights
\[
p_{n+1,j}^H := \frac{H(X_{n+1},X_j)}{\sum_{k=1}^{n+1} H(X_{n+1},X_k)}, \qquad j=1,\dots,n+1,
\]
together with the weighted augmented score distribution
\vspace{-3pt}
\[
\widehat F_{n+1}^H
:=
\sum_{j=1}^{n} p_{n+1,j}^H \,\delta_{S_j}
\;+\;
p_{n+1,n+1}^H \,\delta_{\infty}.
\]
\vspace{-13pt}

A naive localized analogue of split conformal prediction would use the weighted quantile \(Q(1-\alpha;\widehat F_{n+1}^H)\), but this generally fails to preserve finite-sample validity. \cite{guan2023localized} shows that validity can be restored by replacing \(1-\alpha\) with a recalibrated level \(\tilde\alpha\), leading to the prediction set
\vspace{-1pt}
\[
C^{\text{LCP}}(X_{n+1})
=
\{y \in \mathcal Y : s(X_{n+1},y) \le Q(\tilde\alpha;\widehat F_{n+1}^H)\}.
\]
This retains finite-sample marginal coverage while allowing the prediction set to adapt to covariate-dependent heterogeneity.

\vspace{-3pt}
\paragraph{Adaptive conformal inference (ACI).}
A different line of work addresses non-exchangeability through online calibration. Following \cite{gibbs2021adaptive}, let \(D_t\) denote an estimated conformity-score distribution at time \(t\), for example the empirical distribution of recent scores. For any \(\beta \in [0,1]\), define
\[
C^{\text{ACI}}_t(\beta)
=
\{y \in \mathcal Y : s(X_t, y) \le Q(1-\beta; D_t)\}.
\]
Adaptive conformal inference (ACI) replaces the fixed nominal level \(\alpha\) by an online-updated parameter \(\alpha^{\text{ACI}}_t\). After predicting with \(C^{\text{ACI}}_t(\alpha^{\text{ACI}}_t)\), it observes
\vspace{-1pt}
\[
\mathrm{err}^{\text{ACI}}_t := \mathbf{1}\{Y_t \notin C^{\text{ACI}}_t(\alpha^{\text{ACI}}_t)\},
\]
and updates
\[
\alpha^{\text{ACI}}_{t+1}
=
\alpha^{\text{ACI}}_t + \gamma(\alpha-\mathrm{err}^{\text{ACI}}_t),
\]
where \(\gamma>0\) is a step size \citep{gibbs2021adaptive}. This yields a feedback-driven calibration rule that targets long-run coverage under arbitrary distribution shifts.

\vspace{-3pt}
\paragraph{Other related work.}
A key limitation of ACI is its sensitivity to the step size \(\gamma\). DtACI \citep{gibbs2024conformal} and AgACI \citep{zaffran2022adaptive} address this by aggregating multiple ACI experts online, while recent work develops parameter-free online conformal updates based on universal portfolio algorithms \citep{liu2026online}. Other sequential conformal methods update scores or interval sizes over time: EnbPI \citep{enbpi} builds prediction sets around bootstrap ensemble predictors under weak dependence conditions, and SPCI \citep{spci} forecasts future residual quantiles from past residuals. A different direction anticipates abrupt shifts using additional structure: CPTC \citep{sun2025cptc} maintains state-specific score sets and aggregates them using a predicted latent state sequence, but relies on a state predictor and state-conditioned forecaster. Under exchangeability, selection and aggregation methods choose among conformal predictors while preserving finite-sample validity through recalibration \citep{yang2025selection, liang2024conformal}; our setting instead treats coverage as a long-run online constraint. Finally, conformal methods beyond exchangeability use data-dependent or fixed weights, including covariate-shift conformal prediction \citep{tibshirani2019conformal} and fixed-weight approaches for non-exchangeable data \citep{barber2023conformal}.

\vspace{-7pt}
\section{Methods}
\vspace{-3pt}
\subsection{Online Localized Conformal Prediction}
\label{sec:olcp}

We now introduce our first and main method \emph{Online Localized Conformal Prediction} (OLCP).
The starting point is that two distinct difficulties arise in sequential prediction. First,
under covariate heterogeneity, uncertainty can vary substantially across the feature space, so
a global conformal quantile may allocate set size inefficiently. Second, under temporal
distribution shift, the appropriate calibration level itself changes over time. OLCP addresses
both issues simultaneously by combining localized calibration in \(X\) with online updates of
the nominal level.

\paragraph{Localized calibration distribution.}
Fix a bandwidth \(h>0\), a localizer
\vspace{-3pt}
\[
H_h:\mathcal X\times\mathcal X\to[0,\infty),
\]

\vspace{-8pt}
and a rolling calibration window 
\(
\mathcal I_t := \{\max(1, t-R),\dots,t-1\}
\) with fixed window size $R$. For each \(i\in \mathcal I_t\), \(S_i = s(X_i, Y_i)\) denotes the realized conformity score at time \(i\). For a query covariate \(x\in\mathcal X\), define normalized local weights
\vspace{-5pt}
\begin{equation}
w_{t,i}^{(h)}(x)
:=
\frac{H_h(x,X_i)}
{\sum_{j\in\mathcal I_t} H_h(x,X_j)}.
\label{eq:weights}
\end{equation}

\vspace{-10pt}
We then form the localized empirical distribution
\vspace{-4pt}
\[
D_t^{(h)}(x)
:=
\sum_{i\in\mathcal I_t} w_{t,i}^{(h)}(x)\,\delta_{S_i}.
\]

\vspace{-10pt}
This is the online analogue of the weighted empirical distributions used in localized
conformal prediction \citep{guan2023localized}: past scores associated with covariates close
to the query point \(x\) receive more weight.

For any query covariate \(x\in\mathcal X\) and any \(\beta\in[0,1]\), define
\vspace{-5pt}
\[
C_t^{(h)}(x;\beta)
:=
\{y\in\mathcal Y:\; s(x,y)\le Q(1-\beta;D_t^{(h)}(x))\}.
\]

\vspace{-7pt}
Thus \(h\) controls the
degree of localization: small \(h\) emphasizes nearby covariates, while large \(h\) recovers a
more global rule \citep{guan2023localized}.

\vspace{-5pt}
\paragraph{OLCP update.}
OLCP maintains a nominal level \(\alpha_t\in[0,1]\). At time \(t\), it outputs $C_t^{(h)}(X_t; \alpha_t)$,
observes the error
\[
\mathrm{err}_t
:=
\mathbf 1\{Y_t\notin C_t^{(h)}(X_t;\alpha_t)\},
\]
and updates
\begin{equation}
\alpha_{t+1}
=
\Pi_{[0,1]}\bigl(\alpha_t+\gamma(\alpha-\mathrm{err}_t)\bigr),
\label{eq:olcp-update}
\end{equation}
where \(\Pi_{[0,1]}\) is projection onto \([0,1]\) and \(\gamma>0\) is a step size.

The algebraic form of \eqref{eq:olcp-update} is similar to ACI, but the object being calibrated is
different: OLCP updates the level for the localized family \(\{C_t^{(h)}(X_t;\beta):\beta\in[0,1]\}\),
not for a single global conformal rule. This distinction is what allows OLCP to adapt set size across both time and covariate space. 

\begin{algorithm}[t]
\caption{Online Localized Conformal Prediction (OLCP)}
\label{alg:olcp}
\begin{algorithmic}[1]
\Require target miscoverage \(\alpha\), step size \(\gamma\), bandwidth \(h\), window length \(R\)
\State Initialize \(\alpha_1\in[0,1]\)
\For{\(t=1,2,\dots,T\)}
    \State Compute localized weights \(w_{t,i}^{(h)}(X_t)\) for \(i\in\mathcal I_t=\{\max(1, t-R),\dots,t-1\}\) from \eqref{eq:weights}
    \State Form \(D_t^{(h)}(X_t)=\sum_{i\in\mathcal I_t} w_{t,i}^{(h)}(X_t)\delta_{S_i}\)
    \State Output \(C_t^{(h)}(X_t;\alpha_t)=\{y:s(X_t, y)\le Q(1-\alpha_t;D_t^{(h)}(X_t))\}\)
    \State Observe \(Y_t\) and set \(\mathrm{err}_t=\mathbf 1\{Y_t\notin C_t^{(h)}(X_t;\alpha_t)\}\)
    \State Update \(\alpha_{t+1}=\Pi_{[0,1]}(\alpha_t+\gamma(\alpha-\mathrm{err}_t))\)
\EndFor
\end{algorithmic}
\end{algorithm}

\vspace{-3pt}
\paragraph{Pinball-loss view and coverage guarantee.}
The update in \eqref{eq:olcp-update} also admits a useful optimization interpretation. Define the coverage-boundary level
\[
\beta_t^{(h)}
:=
\sup\{\beta\in[0,1]: Y_t\in C_t^{(h)}(X_t;\beta)\},
\]
namely, the boundary nominal level for which the realized response starts to be uncovered. Because \(C_t^{(h)}(X_t;\beta)\) is decreasing in \(\beta\),
\vspace{-4pt}
\[
\alpha_t<\beta_t^{(h)} \Rightarrow \mathrm{err}_t=0,
\qquad
\alpha_t>\beta_t^{(h)} \Rightarrow \mathrm{err}_t=1.
\]
\vspace{-3pt}
At \(\alpha_t=\beta_t^{(h)}\), either outcome may occur. For pinball loss
\[
\ell(\beta,\theta)=\alpha(\beta-\theta)-\min\{0,\beta-\theta\},
\]
we therefore have
\[
\mathrm{err}_t-\alpha
\in
\partial_\theta \ell(\beta_t^{(h)},\theta)\big|_{\theta=\alpha_t},
\]
so \eqref{eq:olcp-update} is exactly projected online gradient descent on the loss sequence
\(\{\ell(\beta_t^{(h)},\cdot)\}_{t=1}^T\). 

We introduce the following long-term coverage guarantee for OLCP with proof deferred in Appendix~\ref{app:proof-projected-olcp}.
\begin{proposition}[Boundary-corrected coverage control for OLCP]
\label{prop:projected-olcp-boundary}
Define
$
z_t:=\alpha_t+\gamma(\alpha-\mathrm{err}_t),
$
and the lower- and upper-boundary projection corrections
$
L_t:=(-z_t)_+,
U_t:=(z_t-1)_+.
$
Equivalently,
$
\alpha_{t+1}=z_t+L_t-U_t.
$
Then, for every \(T\ge 1\) and fixed $\gamma > 0$,
\vspace{-7pt}
\[
\sum_{t=1}^T(\mathrm{err}_t-\alpha)
=
\frac{\alpha_1-\alpha_{T+1}}{\gamma}
+
\frac{1}{\gamma}\sum_{t=1}^T(L_t-U_t).
\]\vspace{-10pt}

In particular, if 
$
\sum_{t=1}^T L_t=o(T),
$
then
\(
\limsup_{T\to\infty}
\
\dfrac1T\sum_{t=1}^T\mathrm{err}_t
\le \alpha.
\)\vspace{-5pt}

If additionally
$
\sum_{t=1}^T (L_t+U_t)=o(T),
$
then
\(
\dfrac1T\sum_{t=1}^T\mathrm{err}_t\to \alpha.
\)
\end{proposition}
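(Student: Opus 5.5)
The plan is a telescoping argument on the projected update, with all projection effects kept in the boundary corrections \(L_t,U_t\).

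\textbf{Step 1 (the projection identity).} Since \(\Pi_{[0,1]}(z)=\min\{1,\max\{0,z\}\}\), I will check three cases.
If \(z_t<0\), then \(L_t=-z_t\) and \(U_t=0\), so \(z_t+L_t-U_t=0\).
If \(z_t>1\), then \(U_t=z_t-1\) and \(L_t=0\), giving \(1\).
If \(z_t\in[0,1]\), both corrections vanish.
These cover all cases, since \(L_t\) and \(U_t\) cannot both be positive. Hence \(\alpha_{t+1}=z_t+L_t-U_t\).

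\textbf{Step 2 (telescoping).} Substituting \(z_t=\alpha_t+\gamma(\alpha-\mathrm{err}_t)\) gives
\[
\alpha_{t+1}-\alpha_t=\gamma(\alpha-\mathrm{err}_t)+L_t-U_t .
\]
Summing over \(t=1,\dots,T\) gives \(\alpha_{T+1}-\alpha_1=\gamma\sum_t(\alpha-\mathrm{err}_t)+\sum_t(L_t-U_t)\). Rearranging and dividing by \(\gamma>0\) yields the displayed identity. No property of the localized sets \(C_t^{(h)}\) is used here; only the algebraic form of the update matters. This is why the localization does not affect the guarantee.

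\textbf{Step 3 (asymptotics).} For the one-sided bound, I use \(\alpha_1,\alpha_{T+1}\in[0,1]\), so \(|\alpha_1-\alpha_{T+1}|\le 1\), together with \(U_t\ge 0\). These give
\[
\frac1T\sum_{t=1}^T\mathrm{err}_t-\alpha\le \frac{1}{\gamma T}+\frac{1}{\gamma T}\sum_{t=1}^T L_t .
\]
Under \(\sum_t L_t=o(T)\), with \(\gamma\) fixed, the right-hand side tends to \(0\). Taking \(\limsup\) gives \(\limsup_T \frac1T\sum_t \mathrm{err}_t\le\alpha\).

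For the two-sided statement, I bound the absolute value:
\[
\Bigl|\frac1T\sum_t(\mathrm{err}_t-\alpha)\Bigr|\le \frac{1}{\gamma T}\Bigl(1+\sum_t(L_t+U_t)\Bigr),
\]
using \(|L_t-U_t|\le L_t+U_t\). This tends to \(0\) when \(\sum_t(L_t+U_t)=o(T)\).

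There is no real obstacle in this argument. The only point needing care is the case check in Step 1, in particular that \(L_t\) and \(U_t\) are never simultaneously nonzero. Beyond that, the proof only requires keeping the signs of \(L_t\) and \(U_t\) straight in the one-sided bound. That bound uses only \(L_t\), since \(U_t\) enters with a favorable sign.
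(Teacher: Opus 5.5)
Your proposal is correct and matches the paper's proof: write the projected update as $\alpha_{t+1}=z_t+L_t-U_t$, telescope over $t=1,\dots,T$, and then bound the result using $\alpha_1,\alpha_{T+1}\in[0,1]$ and the signs of $L_t,U_t$. The differences are cosmetic. You verify the projection identity by cases, which the paper only asserts. You also handle the two-sided limit with a single absolute-value bound, whereas the paper bounds the upper and lower deviations separately, with constants $\alpha_1/\gamma$ and $(1-\alpha_1)/\gamma$.
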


\vspace{-7pt}
\paragraph{Remark on boundary terms.}
Proposition~\ref{prop:projected-olcp-boundary} shows exactly what projection changes relative to the unprojected ACI. Unlike ACI, which can use infinite or empty prediction sets to obtain exact telescoping, OLCP keeps prediction sets finite and must account for \(L_t\) and \(U_t\). The lower correction \(L_t\) measures unresolved undercoverage pressure: it is positive only when the update wants to move \(\alpha_t\) below \(0\), i.e. when OLCP misses while already using a very wide finite set. The upper correction \(U_t\) is the analogous overcoverage pressure near \(\alpha_t=1\). Thus projected OLCP recovers ACI-style long-run calibration whenever these normalized boundary corrections are negligible.  Appendix~\ref{app:olcp-boundary} gives sufficient conditions under which these terms vanish and explains how to diagnose them empirically.

\vspace{-4pt}
\subsection{OLCP-Hedge: constrained aggregation across localization bandwidths}
\label{sec:sv-olcp-adahedge}

\vspace{-4pt}
OLCP requires choosing a localization bandwidth \(h\). This bandwidth controls a familiar
bias--variance trade-off: small \(h\) adapts strongly to local heterogeneity but can be noisy,
whereas large \(h\) is more stable but may wash out covariate-dependent uncertainty
\citep{guan2023localized}. Since the best localization scale is generally unknown and may vary
over time, this motivates aggregating a collection of OLCP experts rather than committing to a single bandwidth in advance.

This is related to the classical prediction-with-expert-advice problem, where the learner
combines expert predictions and competes with the best expert in hindsight
\citep{orabona2019modern}. However, our objective is
not a single scalar loss. If set size is used as the loss, ordinary Hedge may favor narrow
experts that undercover; if miscoverage is used as the loss, the resulting prediction sets may be valid
but inefficient. Thus bandwidth selection is naturally a constrained online problem: minimize set size
while controlling long-run miscoverage.

\vspace{-4pt}
\paragraph{OLCP expert pool.}
Fix \(K\) OLCP procedures, for example, with different bandwidths \(\{h_i:i = 1,\ldots, K\}\). At time \(t\), expert \(i\) outputs a
prediction set \(C_{t,i}(X_t)\), for example $C_{t,i}(X_t):=C_t^{(h_i)}(X_t;\alpha_t)$\footnote{Our algorithm can also be applied to aggregate OLCP procedures with different step size $\gamma$, see more on Section~\ref{sec:discussion}.}, and receives feedback of 
$$
\text{size } \omega_{t,i}:= \text{\texttt{size}}(C_{t,i}(X_t)) \qquad \text{and} \qquad
\text{miscoverage }\mathrm{err}_{t,i}:=\mathbf 1\{Y_t\notin C_{t,i}(X_t)\},
$$
where \texttt{size}($\cdot$) could be interval width, or cardinality depending on the tasks.

Let \(\omega_t=(\omega_{t,1},\dots,\omega_{t,K})\) and
\(e_t=(\mathrm{err}_{t,1},\dots,\mathrm{err}_{t,K})\). We maintain a distribution \(p_t\in\Delta_K\) over experts,
sample \(I_t\sim p_t\), and output
\(
C_{t,I_t}(X_t)
\),

\vspace{-5pt}
\paragraph{Size objective and miscoverage constraint.}
For \(p\in\Delta_K\), define
\[
f_t(p):=\langle \omega_t,p\rangle,
\qquad
g_t(p):=\langle e_t,p\rangle-\alpha .
\]
Then
\[
\mathbb E_{I_t\sim p_t}[\omega_{t, I_t}]=f_t(p_t),
\qquad
\mathbb E_{I_t\sim p_t}[\mathrm{err}_{t, I_t}]=\langle e_t,p_t\rangle=g_t(p_t)+\alpha.
\]
Thus \(g_t(p_t)\) is the expected excess miscoverage of the aggregate. Equivalently, the
meta-objective is
\vspace{-3pt}
\[
\min_{p_t\in\Delta_K}\sum_{t=1}^T f_t(p_t)
\qquad
\text{while keeping}
\qquad
\sum_{t=1}^T (g_t(p_t))_+
\ \text{sublinear}.
\]
This is an instance of \emph{constrained online convex optimization} (COCO): the learner
chooses an action before observing a convex loss and constraint, and aims to control both
regret and cumulative constraint violation. COCO has a substantial recent literature on
time-varying constraints and long-term feasibility; see, e.g.,
\cite{guo2022online,sinha2024optimal} for recent overviews and comparisons. 

We adapt the algorithm proposed in  \cite{sinha2024optimal} because it gives state-of-the-art simultaneous guarantees,
\(O(\sqrt T)\) regret and \(\widetilde O(\sqrt T)\) cumulative constraint violation, without
Slater-type assumptions or per-round constrained optimization. Their reduction combines a
Lyapunov-style virtual queue with a black-box adaptive OCO subroutine. 

The original paper focuses on the Euclidean space; here we extend their algorithm to the probability simplex of OLCP experts, and prove the corresponding size-regret and excess-miscoverage bounds in the
probability simplex geometry.

\vspace{-3pt}
\paragraph{Assumptions.}
We use the following assumptions, specialized to our expert aggregation problem.
\vspace{-3pt}
\begin{itemize}
    \item \textbf{Assumption A (bounded sizes).}
There exists \(G>0\) such that for all \(t\),
\[
\|\omega_t\|_\infty\le G,
\qquad
\|e_t\|_\infty\le G .
\]
Since \(e_{t,i}\in\{0,1\}\), this only requires a uniform bound on set sizes. This assumption is mild after size normalization or when the response range is bounded.

\item 
\textbf{Assumption B (uniformly feasible comparator).}
There exists \(u^\star\in\Delta_K\) such that
\[
g_t(u^\star)=\langle e_t,u^\star\rangle-\alpha\le 0,
\qquad
\forall t.
\]
This assumption is
a standard but strong feasibility condition in COCO: one compares against a fixed feasible action. In our
setting, it can be enforced by including a conservative expert, although this may increase the
comparator size. If exact feasibility is unavailable, the same viewpoint suggests a relaxed fallback using online
constraint-satisfaction ideas from \cite{sinha2024optimal}: instead of requiring a uniformly feasible
comparator, one can compare to an \(S\)-feasible or \(P_T\)-constrained benchmark and aim for
sublinear excess miscoverage relative to that weaker benchmark. We discuss diagnostics for this assumption in Appendix~\ref{app:fea_dia}.

\end{itemize}

\vspace{-5pt}
\paragraph{Surrogate loss.}
Following \cite{sinha2024optimal}, OLCP-Hedge maintains a virtual queue for excess
miscoverage and feeds AdaHedge\footnote{AdaHedge is only one possible subroutine; see the modularity discussion below in \ref{para:adahedge_remark}.}\citep{derooij2014follow} a surrogate loss that combines set size with queue-weighted
constraint violation. Let \(\mathcal Q(0)=0\) and update
\[
\mathcal Q(t)=\mathcal Q(t-1)+\kappa(g_t(p_t))_+, \qquad\text{where }\kappa>0 \text{ is a constant}.
\]
With \(\Phi(q)=e^{\lambda q}-1\) and $\lambda >0$, define
\[
\hat f_t(p)
=
V\kappa f_t(p)
+
\Phi'(\mathcal Q(t))\,\kappa(g_t(p))_+, \qquad\text{where }V>0 \text{ is a constant}.
\]
The first term penalizes prediction set size, while the second term increasingly penalizes
excess miscoverage as the queue grows. The full algorithm is detailed in Algorithm~\ref{alg:sv-olcp-adahedge} with more details of AdaHedge in Appendix~\ref{app:adahedge}.
\begin{algorithm}[t]
\caption{OLCP-Hedge}
\label{alg:sv-olcp-adahedge}
\begin{algorithmic}[1]
\Require target miscoverage \(\alpha\), OLCP experts \(\{\{C_{t,i}\}_{t=1}^T\}_{i=1}^K\), parameters \(V,\kappa,\lambda\)
\State Initialize \(\mathcal{Q}(0)=0\) and AdaHedge on \(\Delta_K\)
\For{\(t=1,2,\dots,T\)}
    \State AdaHedge outputs \(p_t\in\Delta_K\)
    \State Sample \(I_t\sim p_t\) and output \(C_{t,I_t}(X_t)\)
    \State Observe \(Y_t\); compute the size \(\omega_{t,i}\) and \(\mathrm{err}_{t,i}=\mathbf 1\{Y_t\notin C_{t,i}(X_t)\}\) for all \(i\)
    \State Define \(f_t(p)=\langle \omega_t,p\rangle\), \(g_t(p)=\langle e_t,p\rangle-\alpha\)
    \State Update \(\mathcal{Q}(t)=\mathcal{Q}(t-1)+\kappa(g_t(p_t))_+\)
    \State Form \(\hat f_t(p)=V\kappa f_t(p)+\Phi'(\mathcal{Q}(t))\kappa(g_t(p))_+\)
    \State Choose \(\xi_t\in\partial \hat f_t(p_t)\) and feed the linearized loss
    \(\ell_t(p)=\langle \xi_t,p\rangle\) to AdaHedge
\EndFor
\end{algorithmic}
\end{algorithm}

\vspace{-5pt}
\paragraph{Guarantee.}
The next theorem states that OLCP-Hedge competes with the best feasible expert mixture in
expected size while keeping cumulative expected excess miscoverage sublinear. The proof is
deferred to Appendix~\ref{app:sv-proof}.

\begin{theorem}[Size regret and excess-miscoverage control]
\label{thm:sv-olcp}
Assume Assumptions~A--B. Run Algorithm~\ref{alg:sv-olcp-adahedge} with
\vspace{-5pt}
\[
  V=1,\qquad C_{\mathrm{AH}}:=2 \sqrt{4+\ln K}, \qquad
  \kappa := \frac{1}{\sqrt{2}\,C_{\mathrm{AH}}\,G},\qquad
  \lambda := \frac{1}{2\sqrt{T}}.
\]

\vspace{-10pt}
Then, for any feasible comparator \(u^\star\in\Delta_K\), we have the following expected set-size regret bound:
\vspace{-5pt}
\[
  \sum_{t=1}^T
  \Bigl(
    \mathbb E_{I_t\sim p_t}[\omega_{t, I_t}]
    -
    \langle \omega_t,u^\star\rangle
  \Bigr)
  \le
  4G\sqrt{2(4+\ln K)T}
  =
  O\!\left(G\sqrt{T(1+\ln K)}\right),
\]

\vspace{-10pt}
and the following cumulative excess-miscoverage bound:
\vspace{-5pt}
\[
  \sum_{t=1}^T
  \Bigl(
    \mathbb E_{I_t\sim p_t}[\mathrm{err}_{t, I_t}]-\alpha
  \Bigr)_+
  \le
  4G\sqrt{2(4+\ln K)T}\;
  \ln\!\Bigl(2+\bigl(2+\tfrac{\sqrt2}{2}\bigr)T\Bigr)
  =
  \widetilde O\!\left(G\sqrt{T(1+\ln K)}\right).
\]
\vspace{1pt}
\end{theorem}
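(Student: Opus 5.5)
The plan follows the Lyapunov-drift reduction of \cite{sinha2024optimal}, transplanted to the simplex with AdaHedge as the black-box learner. The first step is to record what AdaHedge guarantees on the linearized losses $\ell_t(p)=\langle\xi_t,p\rangle$. For any comparator $u\in\Delta_K$ I would use the bound
\[
\sum_{t=1}^T\langle \xi_t,p_t-u\rangle\le C_{\mathrm{AH}}\sqrt{\textstyle\sum_t\|\xi_t\|_\infty^2},
\]
with constants absorbed into $C_{\mathrm{AH}}=2\sqrt{4+\ln K}$; this is the standard AdaHedge bound, with range $\le 2\|\xi_t\|_\infty$. Since $\hat f_t$ is convex (linear plus the positive part of a linear map), convexity gives $\hat f_t(p_t)-\hat f_t(u)\le\langle\xi_t,p_t-u\rangle$. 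The subgradient satisfies $\|\xi_t\|_\infty\le \kappa G(V+\Phi'(\mathcal Q(t)))$ by Assumption A. I would take $u=u^\star$, so that $(g_t(u^\star))_+=0$ by Assumption B.

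The second step is the drift inequality. By convexity of $\Phi$, $\Phi(\mathcal Q(t))-\Phi(\mathcal Q(t-1))\le\Phi'(\mathcal Q(t))\,\kappa(g_t(p_t))_+$. Summing and combining this with the regret bound gives
\[
\Phi(\mathcal Q(T))+V\kappa\sum_t\bigl(f_t(p_t)-f_t(u^\star)\bigr)\le C_{\mathrm{AH}}\kappa G\sqrt{\textstyle\sum_t (V+\Phi'(\mathcal Q(t)))^2}.
\]
I would then bound the right side using monotonicity of $\mathcal Q$: it is at most $C_{\mathrm{AH}}\kappa G\sqrt{2T}\,\bigl(V+\Phi'(\mathcal Q(T))\bigr)$. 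With $\kappa=1/(\sqrt2 C_{\mathrm{AH}}G)$ this becomes $\sqrt T\,(1+\lambda e^{\lambda\mathcal Q(T)})$.

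The third step extracts both conclusions. For the regret bound, the size regret is lower bounded by $-2GT$ on the simplex, which is too crude. Instead I would note that $\Phi(\mathcal Q)-\sqrt T\lambda e^{\lambda \mathcal Q}=e^{\lambda\mathcal Q}(1-\tfrac12)-1\ge -\tfrac12$ when $\lambda=1/(2\sqrt T)$. This leaves $\kappa\cdot\mathrm{Regret}\le \sqrt T+\tfrac12$, so $\mathrm{Regret}\le O(C_{\mathrm{AH}}G\sqrt T)$, and I would then match the constants to $4G\sqrt{2(4+\ln K)T}$. For the violation bound, the regret term is at least $-2GT\kappa V$, a bounded multiple of $T$ after scaling, so
\[
\tfrac12 e^{\lambda\mathcal Q(T)}\le 1+\sqrt T+\sqrt2\,T/C_{\mathrm{AH}}\cdot(\ldots).
\]
This gives $\mathcal Q(T)\le 2\sqrt T\ln\bigl(2+(2+\tfrac{\sqrt2}{2})T\bigr)$. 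Dividing by $\kappa$ gives $\sum_t(g_t(p_t))_+=\mathcal Q(T)/\kappa$, which matches the stated bound. Finally, the identities $\mathbb E[\omega_{t,I_t}]=f_t(p_t)$ and $\mathbb E[\mathrm{err}_{t,I_t}]-\alpha=g_t(p_t)$ translate the statements into the claimed form.

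The main obstacle is the bookkeeping in the third step. The difficulty is choosing $\lambda$ so that the $\Phi'$ term on the right is dominated by $\Phi$ on the left without losing the $\sqrt T$ rate. The remaining care is in tracking the exact AdaHedge constant with losses of range $2\|\xi_t\|_\infty$ and possibly negative entries. I would first verify the AdaHedge bound in terms of $\sum_t\|\xi_t\|_\infty^2$, then tune only $\lambda$ to absorb the queue term.
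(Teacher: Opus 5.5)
Your proposal is correct and follows the paper's proof essentially step for step. The shared steps are: the convexity drift bound on $\Phi(\mathcal Q(t))$; cancellation of the queue term at the feasible $u^\star$; the AdaHedge bound, which the paper also simply cites, combined with $\|\xi_t\|_\infty\le\kappa G(1+\Phi'(\mathcal Q(t)))$ and monotonicity of $\mathcal Q$; the choice $\kappa=(\sqrt2\,C_{\mathrm{AH}}G)^{-1}$ making the prefactor $\sqrt T$; $\lambda\sqrt T=\tfrac12$ to absorb the $\Phi'$ term for the size regret; and the crude $-2\kappa G T$ lower bound on the size regret only for the queue bound. The only cosmetic difference is that the paper proves both bounds for every prefix $m\le T$ and simply drops the nonpositive term $(\lambda\sqrt m-1)e^{\lambda\mathcal Q(m)}$, getting $\sqrt m+1$ rather than your $\sqrt T+\tfrac12$; both lead to the same stated constants.
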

\vspace{-10pt}
\paragraph{Remark on the modularity of the subroutine.}
\label{para:adahedge_remark}
The OLCP-Hedge analysis is modular in the OCO subroutine, following the black-box philosophy of \cite{sinha2024optimal}. The proof only uses that, when the subroutine is fed the linearized surrogate losses
\(\ell_t(p)=\langle \xi_t,p\rangle\), it satisfies a data-dependent regret bound of the form
\vspace{-6pt}
\[
  \sum_{t=1}^T \langle \xi_t,p_t-u\rangle
  \le
  C\sqrt{\sum_{t=1}^T\|\xi_t\|_\infty^2},
  \qquad \forall u\in\Delta_K.
\]
AdaHedge is one parameter-free expert algorithm with such a worst-case/adaptive guarantee \citep{derooij2014follow,orabona2019modern}. More broadly, related adaptive expert algorithms providing comparable data-dependent regret guarantees could be used in the same reduction after replacing \(C_{\mathrm{AH}}\) by the corresponding constant or regret complexity \citep{gaillard2014second,koolen2015second,orabona2016coin}. Thus the theorem is not tied to AdaHedge itself.

\vspace{-6pt}
\section{Experiments}
\label{sec:exp}
\vspace{-3pt}
We evaluate OLCP and OLCP-Hedge on both synthetic experiments and real sequential prediction tasks. We first introduce our experimental setup, then present simulation and real-data results. Code is available  \href{https://github.com/yuhenglai/OLCP}{online}.
\vspace{-5pt}
\subsection{Experimental setup}
\label{sec:exp-setup}
\vspace{-3pt}
We compare seven methods (see Section~\ref{sec:related_work}): CP \citep{ lei2018distribution}, LCP \citep{guan2023localized}, ACI \citep{gibbs2021adaptive}, DtACI \citep{gibbs2024conformal}, SPCI \citep{spci}, and proposed OLCP/OLCP-Hedge. All methods use the same base predictor, conformity scores, and rolling calibration window. For localized methods (LCP/OLCP/OLCP-Hedge), we use an exponential kernel
\vspace{-5pt}
\[
H_h(x,x')
=
\exp\!\left(-\frac{\|\tilde x-\tilde x'\|_2}{h}\right),
\]
where covariates are standardized within the current calibration window before computing distances. The base bandwidth \(h_0\) is chosen by a Silverman-style rule, and OLCP-Hedge aggregates the grid
\vspace{-5pt}
\[
h\in\{0.5,0.75,1,1.25,1.5\}h_0 .
\]
\vspace{-20pt}

All calibration is online: at time \(t\), each method uses only past conformity scores in the rolling window. Full implementation details, including bandwidth formulas, adaptive step sizes, and SPCI settings, are given in Appendix~\ref{app:method}.

All methods are evaluated using empirical coverage and average set size, for prediction set (interval) $C_t(X_t) = [L_t(X_t), U_t(X_t)]$ produced by any methods above, we measure
\vspace{-4pt}
\[
\widehat{\mathrm{cov}}
=
\frac1T\sum_{t=1}^T \mathbf 1\{Y_t\in C_t(X_t)\}, \qquad
\widehat{\mathrm{size}}
=
\frac1T\sum_{t=1}^T
\bigl(U_t(X_t)-L_t(X_t)\bigr). 
\]

\vspace{-15pt}
\subsection{Simulations}
\label{sec:sim}

\vspace{-5pt}
We first use controlled synthetic experiments to isolate two failure modes of existing online conformal methods: covariate-dependent heterogeneity and abrupt temporal distribution shift. We simulate a univariate time series \(\{Y_t\}_{t=0}^T\) and form a one-step-ahead prediction task with
\(
X_t=Y_{t-1}.\)
A fixed linear predictor \(\hat f\) is trained on the first \(500\) observations, and all methods use the absolute residual score
\(
S_t=|Y_t-\hat f(X_t)|.
\)
We use \(T=1{,}500\), rolling calibration window \(R=200\), target miscoverage \(\alpha=0.1\), and \(100\) Monte Carlo repetitions. All the results are reported on the test set.

Let \(\varepsilon_t\sim N(0,1)\). We consider three scenarios:
\vspace{-3pt}
\begin{itemize}
    \item \textbf{A: Stationary:}
    \(Y_t=0.5Y_{t-1}+\varepsilon_t\).
    \item \textbf{B: Heterogeneous:}
    \(Y_t=0.5Y_{t-1}+\sigma_t\varepsilon_t\), where
    \(\sigma_t=\min\{\exp(0.25Y_{t-1}),10\}\).
    \item \textbf{C: Change point:}
    \(Y_t=\phi_tY_{t-1}+\varepsilon_t\), where
    \(
    \phi_t=
    \begin{cases}
    0.8, & t\le T/2,\\
    -0.8, & t>T/2.
    \end{cases}
    \)
\end{itemize}

\vspace{-10pt}
Scenario A is a sanity check, Scenario B tests adaptation to covariate-dependent noise, and Scenario C tests adaptation to abrupt temporal shift.

\begin{table}[t]
\vspace{-15pt}
\centering
\caption{Simulation results over \(100\) repetitions. Each entry reports mean (standard deviation). 
Boldface marks the smallest average size among methods whose coverage attains the \(0.90\) target.}
\label{tab:sim-summary}
\small
\setlength{\tabcolsep}{3.8pt}
\begin{tabular}{lcccccc}
\toprule
 & \multicolumn{2}{c}{A: Stationary} & \multicolumn{2}{c}{B: Heterogeneous} & \multicolumn{2}{c}{C: Change point} \\
\cmidrule(lr){2-3}\cmidrule(lr){4-5}\cmidrule(lr){6-7}
Method & Coverage & Size & Coverage & Size & Coverage & Size \\
\midrule
CP
& \textbf{0.900 (0.005)} & \textbf{3.301 (0.086)}
& 0.900 (0.005) & 3.611 (0.142)
& 0.869 (0.009) & 7.363 (0.468) \\
LCP
& 0.892 (0.005) & 3.271 (0.085)
& 0.894 (0.006) & 3.448 (0.106)
& 0.879 (0.008) & 6.083 (0.289) \\
ACI
& 0.900 (0.003) & 3.341 (0.095)
& 0.900 (0.003) & 3.738 (0.192)
& 0.900 (0.004) & 7.908 (0.477) \\
DtACI
& 0.900 (0.003) & 3.326 (0.094)
& 0.900 (0.004) & 3.692 (0.171)
& 0.898 (0.004) & 7.840 (0.466) \\
SPCI
& 0.812 (0.010) & 2.850 (0.070)
& 0.819 (0.009) & 3.056 (0.109)
& 0.773 (0.014) & 3.371 (0.116) \\
\midrule
OLCP
& 0.900 (0.003) & 3.382 (0.100)
& 0.901 (0.003) & 3.572 (0.129)
& 0.901 (0.003) & 6.349 (0.278) \\
OLCP-Hedge
& 0.900 (0.003) & 3.376 (0.097)
& \textbf{0.900 (0.003)} & \textbf{3.564 (0.128)}
& \textbf{0.900 (0.004)} & \textbf{6.348 (0.278)} \\
\bottomrule
\end{tabular}
\end{table}

\begin{figure}[t]
  \centering
  \vspace{-10pt}\includegraphics[width=1\linewidth]{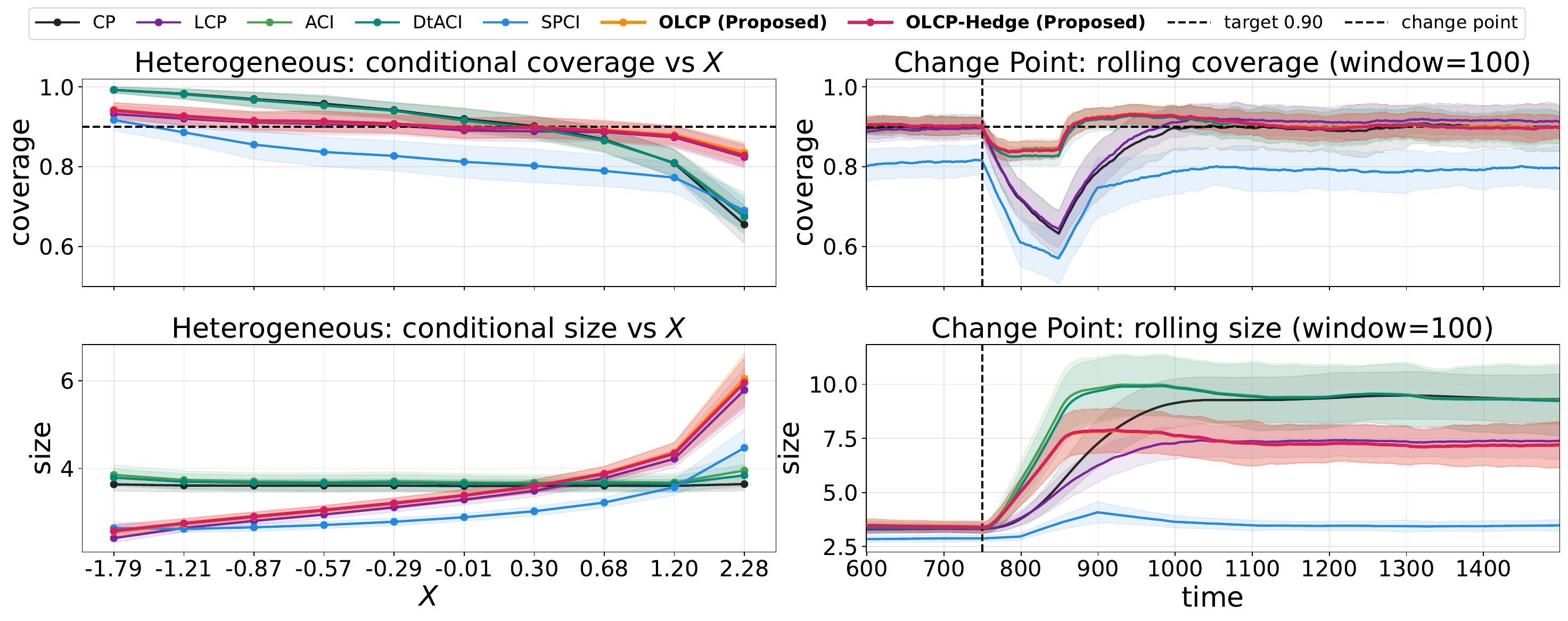}
  \vspace{-20pt}\caption{\textbf{Diagnostics for simulation.}
Left panels show Scenario B conditional coverage and average size across \(X_t\). Right panels show Scenario C rolling coverage and rolling average size with window size \(100\); the vertical dashed line marks the change point. Shaded bands show mean \(\pm\) one standard deviation across repetitions. OLCP (DtACI) curves are partially hidden behind OLCP-Hedge (ACI).}
  \label{fig:simBC_two_col}
  \vspace{-15pt}
\end{figure}

\vspace{-7pt}
\paragraph{Results.}
Table~\ref{tab:sim-summary} summarizes marginal coverage and average prediction set size, with running time of each method reported in Appendix~\ref{app:running_time}. Scenario A is a sanity check: in the stationary homoskedastic case, there is little structure for either localization or adaptation to exploit, and the valid methods have similar sizes. 

Scenario B isolates covariate-dependent heterogeneity. Global methods (CP, ACI, DtACI) achieve reasonable marginal coverage, but Figure~\ref{fig:simBC_two_col} shows that their prediction-set sizes are nearly constant in \(X_t\), causing undercoverage in high-noise regions. LCP localizes the residual quantile but mildly undercovers due to its fixed nominal level. OLCP and OLCP-Hedge combine localization with online calibration, expanding sets where noise is high while maintaining near-nominal marginal coverage; OLCP-Hedge gives the smallest size among near-valid methods.

Scenario C isolates abrupt temporal shift. CP and LCP recover slowly after the change point and undercover, while ACI and DtACI restore coverage mainly through global size inflation. OLCP and OLCP-Hedge recover to the target coverage level with substantially smaller sets, showing that localization improves efficiency under heterogeneous uncertainty and online calibration restores validity under temporal shift.

\vspace{-5pt}
\subsection{Real-data experiments}
\label{sec:realdata}
\vspace{-5pt}
We evaluate the proposed methods on three real time-series datasets; additional preprocessing details, model hyperparameters, data splits, and diagnostic plots are deferred to Appendix~\ref{app:real-data}.

\vspace{-4pt}
\begin{itemize}
    \item \textbf{ELEC2.}
    ELEC2 contains electricity market prices, demands, and transfers from New South Wales and Victoria \citep{harries1999splice}. We predict transfer using the two states' prices and demands as covariates. After removing the initial constant-response segment, the dataset has \(27{,}552\) rows; the base predictor is a gradient-boosted regression tree.

    \item \textbf{ILINet.}
    ILINet is a weekly CDC influenza-like illness surveillance dataset with \(1{,}305\) observations \citep{cdcFluView,darts}. We predict the state-population-weighted weekly patient percentage using a lag window of \(26\) past responses as covariates, with a temporal convolutional network (TCN) as the base predictor \citep{lea2016temporalconvolutionalnetworksunified}.

    \item \textbf{ETF volatility.}
    We forecast absolute daily log returns for five ETFs in recent $18$ years ($4{,}742$ trading days): SPY, QQQ, IWM, EEM, and TLT \citep{etf}. Covariates are a lag window of \(30\) past observations and the lagged VIX index \citep{cboeVIX}; the base predictor is a TCN. Widths are reported in percentage points of absolute log return.
\end{itemize}
\vspace{-5pt}

Table~\ref{tab:real-data} summarizes the real-data results with running time in Appendix~\ref{app:running_time}. Standard errors are computed using a block bootstrap over time, with block lengths \(48/26/20\) for ELEC2/ILINet/ETF volatility, respectively and \(1000\) bootstrap replicates. Across datasets, fixed LCP often yields shorter intervals but can undercover, while ACI and DtACI restore nominal coverage at the cost of wider intervals. OLCP and OLCP-Hedge provide the strongest overall coverage--efficiency tradeoff: OLCP-Hedge is consistently near-nominal and is the most efficient among the near-valid methods.

The rolling diagnostics in Appendix~\ref{app:real-data} further clarify this pattern. Figures~\ref{fig:elec2-roll}--\ref{fig:etf-roll} show that ACI and DtACI tend to maintain coverage by raising interval sizes globally, whereas OLCP and OLCP-Hedge track the target with lower rolling size over much of the test period. The ETF volatility regime analysis in Table~\ref{tab:vix-regime} gives a more targeted view: in low-volatility periods, all near-valid methods over-cover, but OLCP and OLCP-Hedge achieve the smallest sizes among them. In high-volatility periods, OLCP attains the best trade-off with coverage \(0.890\) with size \(4.004\), improving on both ACI (\(0.890\), \(4.349\)) and DtACI (\(0.889\), \(4.184\)). These diagnostics reinforce the central message: localization improves efficiency by adapting set size to the covariate regime, while online calibration maintains long-run coverage under temporal shift.
\begin{table}[t]
\centering
\caption{Real-data results. Each entry reports mean (block-bootstrap standard error). Coverage is empirical marginal coverage; Size is average interval width. For ETF volatility, Size is reported as percentage points of absolute daily log return. ELEC2 and ILINet sizes are in their normalized response scales.}
\label{tab:real-data}
\small
\setlength{\tabcolsep}{3.5pt}
\begin{tabular}{lcccccc}
\toprule
& \multicolumn{2}{c}{ELEC2}
& \multicolumn{2}{c}{ILINet}
& \multicolumn{2}{c}{ETF volatility} \\
\cmidrule(lr){2-3}
\cmidrule(lr){4-5}
\cmidrule(lr){6-7}
Method
& Coverage & Size
& Coverage & Size
& Coverage & Size (\%) \\
\midrule
CP
& 0.890 (0.007) & 0.3513 (0.008)
& 0.897 (0.034) & 0.9366 (0.134)
& 0.905 (0.005) & 2.6890 (0.062) \\
LCP
& 0.896 (0.006) & 0.3312 (0.007)
& 0.908 (0.030) & 0.7810 (0.093)
& 0.896 (0.005) & 2.4269 (0.053) \\
ACI
& 0.900 (0.006) & 0.3675 (0.009)
& 0.900 (0.033) & 0.9004 (0.142)
& 0.901 (0.004) & 2.6741 (0.088) \\
DtACI
& 0.898 (0.006) & 0.3660 (0.009)
& 0.900 (0.033) & 0.8886 (0.141)
& 0.900 (0.004) & 2.6178 (0.082) \\
SPCI
& 0.777 (0.008) & 0.1780 (0.002)
& 0.794 (0.043) & 0.5110 (0.075)
& 0.842 (0.006) & 2.2495 (0.056) \\
\midrule
OLCP
& \textbf{0.900 (0.005)} & \textbf{0.3377 (0.008)}
& 0.904 (0.027) & 0.7917 (0.110)
& 0.900 (0.004) & 2.5587 (0.083) \\
OLCP-Hedge
& 0.899 (0.005) & 0.3351 (0.008)
& \textbf{0.900 (0.027)} & \textbf{0.7478 (0.105)}
& \textbf{0.900 (0.004)} & \textbf{2.5516 (0.079)} \\
\bottomrule
\vspace{-23pt}
\end{tabular}
\end{table}

\vspace{-10pt}
\section{Conclusion and future work}
\vspace{-5pt}
\label{sec:discussion}
Our results show that localization improves efficiency under heterogeneity, while online update restores coverage under temporal shift. However, several limitations remain. Like other localized conformal methods \citep{guan2023localized}, OLCP also depends on informative covariates and a suitable distance metric. OLCP-Hedge assumes a uniformly feasible expert mixture, which may require conservative experts and can be restrictive in unbounded regression; it also uses full-information feedback and could add to computational burden. Future work could extend constrained aggregation to step sizes, localizers, distance metrics, calibration windows, and base predictors, and study learned localization, bandit-feedback variants, and stronger local coverage guarantees.


\bibliographystyle{plain}
\bibliography{ref}





\newpage
\appendix
\section{Proof of Proposition~\ref{prop:projected-olcp-boundary}}
\label{app:proof-projected-olcp}
\begin{proof}
 By definition,
\[
z_t=\alpha_t+\gamma(\alpha-\mathrm{err}_t),
\qquad
L_t=(-z_t)_+,
\qquad
U_t=(z_t-1)_+ .
\]
Since projection onto \([0,1]\) satisfies
\[
\Pi_{[0,1]}(z_t)=z_t+L_t-U_t,
\]
the projected update can be written as
\[
\alpha_{t+1}
=
\alpha_t+\gamma(\alpha-\mathrm{err}_t)+L_t-U_t .
\]
Summing this identity over \(t=1,\dots,T\) gives
\[
\alpha_{T+1}-\alpha_1
=
\gamma\sum_{t=1}^T(\alpha-\mathrm{err}_t)
+
\sum_{t=1}^T(L_t-U_t).
\]
Rearranging yields
\[
\sum_{t=1}^T(\mathrm{err}_t-\alpha)
=
\frac{\alpha_1-\alpha_{T+1}}{\gamma}
+
\frac{1}{\gamma}\sum_{t=1}^T(L_t-U_t),
\]
which proves the claimed identity.

Since \(\alpha_{T+1}\in[0,1]\), \(L_t\ge0\), and \(U_t\ge0\), we have
\[
\sum_{t=1}^T(\mathrm{err}_t-\alpha)
\le
\frac{\alpha_1}{\gamma}
+
\frac{1}{\gamma}\sum_{t=1}^T L_t .
\]
Dividing by \(T\) gives
\[
\frac1T\sum_{t=1}^T\mathrm{err}_t-\alpha
\le
\frac{\alpha_1}{T\gamma}
+
\frac{1}{T\gamma}\sum_{t=1}^T L_t .
\]
If \(\gamma>0\) is fixed and \(\sum_{t=1}^T L_t=o(T)\), the right-hand side converges to zero. Hence
\[
\limsup_{T\to\infty}
\left(
\frac1T\sum_{t=1}^T\mathrm{err}_t-\alpha
\right)
\le 0.
\]

For the lower deviation, the same identity gives
\[
\sum_{t=1}^T(\alpha-\mathrm{err}_t)
=
\frac{\alpha_{T+1}-\alpha_1}{\gamma}
+
\frac{1}{\gamma}\sum_{t=1}^T(U_t-L_t).
\]
Using \(\alpha_{T+1}\le1\), \(L_t\ge0\), and \(U_t\ge0\), we obtain
\[
\sum_{t=1}^T(\alpha-\mathrm{err}_t)
\le
\frac{1-\alpha_1}{\gamma}
+
\frac{1}{\gamma}\sum_{t=1}^T U_t .
\]
Dividing by \(T\),
\[
\alpha-\frac1T\sum_{t=1}^T\mathrm{err}_t
\le
\frac{1-\alpha_1}{T\gamma}
+
\frac{1}{T\gamma}\sum_{t=1}^T U_t .
\]
If \(\sum_{t=1}^T(L_t+U_t)=o(T)\), then in particular
\(\sum_{t=1}^T L_t=o(T)\) and \(\sum_{t=1}^T U_t=o(T)\). Therefore both the upper and lower deviations vanish, and hence
\[
\frac1T\sum_{t=1}^T\mathrm{err}_t\to \alpha .
\]
\end{proof}


\newcommand{\pfstep}[1]{\par\medskip\noindent\textbf{#1}\par\smallskip}

\section{More discussion on boundary terms}
\label{app:olcp-boundary}

This section expands on Proposition~\ref{prop:projected-olcp-boundary}. The proposition is an exact
pathwise identity for the projected update
\[
\alpha_{t+1}
=
\Pi_{[0,1]}\bigl(\alpha_t+\gamma(\alpha-\mathrm{err}_t)\bigr).
\]
The correction terms
\[
L_t=(-z_t)_+,\qquad U_t=(z_t-1)_+,
\qquad
z_t=\alpha_t+\gamma(\alpha-\mathrm{err}_t),
\]
measure how much the unconstrained update would have moved outside the interval \([0,1]\).
Thus \(L_t\) is the amount of lower-boundary clipping, and \(U_t\) is the amount of upper-boundary
clipping.

\paragraph{Interpretation.}
The term \(L_t>0\) can occur only when \(\mathrm{err}_t=1\) and
\[
\alpha_t<\gamma(1-\alpha).
\]
In words, OLCP misses while \(\alpha_t\) is already near \(0\). Since smaller \(\alpha_t\) corresponds
to wider prediction sets, \(L_t\) records the amount by which the update would like to widen the set
beyond the widest finite set allowed by the projected implementation. Similarly, \(U_t>0\) can occur
only when \(\mathrm{err}_t=0\) and
\[
\alpha_t>1-\gamma\alpha.
\]
This records the amount by which the update would like to shrink the set beyond the smallest finite
set allowed by the projection.

This is the key difference from the basic ACI telescoping guarantee. The original unprojected ACI
update allows \(\alpha_t\) to leave \([0,1]\), which can produce
full/infinite or empty prediction sets at the boundary. This makes the recursion telescope exactly
without correction terms. Projected OLCP instead keeps all prediction sets finite, so the price is the
appearance of \(L_t\) and \(U_t\).

\paragraph{A sufficient condition.}
We next give a simple sufficient condition under which the boundary corrections are negligible.
For clarity, allow the step size to depend on the horizon and write \(\gamma=\gamma_T\). Define
\[
\eta_T^-:=\gamma_T(1-\alpha),
\qquad
\eta_T^+:=\gamma_T\alpha .
\]
Since the prediction sets are decreasing in the nominal level, we have the deterministic inclusions
\[
\{L_t>0\}
\subseteq
\bigl\{Y_t\notin C_t^{(h)}(X_t;\eta_T^-)\bigr\},
\]
and
\[
\{U_t>0\}
\subseteq
\bigl\{Y_t\in C_t^{(h)}(X_t;1-\eta_T^+)\bigr\}.
\]
Moreover,
\[
0\le L_t
\le
\gamma_T(1-\alpha)\,
\mathbf 1\{Y_t\notin C_t^{(h)}(X_t;\eta_T^-)\},
\]
and
\[
0\le U_t
\le
\gamma_T\alpha\,
\mathbf 1\{Y_t\in C_t^{(h)}(X_t;1-\eta_T^+)\}.
\]
Therefore, a sufficient condition for
\[
\sum_{t=1}^T L_t=o(T\gamma_T)
\quad\text{and}\quad
\sum_{t=1}^T U_t=o(T\gamma_T)
\]
in expectation is
\[
\frac1T\sum_{t=1}^T
\mathbb P\!\left\{Y_t\notin C_t^{(h)}(X_t;\eta_T^-)\mid\mathcal F_{t-1}\right\}
=o(1),
\]
and
\[
\frac1T\sum_{t=1}^T
\mathbb P\!\left\{Y_t\in C_t^{(h)}(X_t;1-\eta_T^+)\mid\mathcal F_{t-1}\right\}
=o(1).
\]
These conditions say that boundary failures are rare: the near-widest finite set should not miss too
often, and the near-smallest finite set should not cover too often.

\paragraph{Empirical diagnostic.}
The boundary terms are directly observable during the run. In practice, one can report
\[
\frac{1}{T\gamma}\sum_{t=1}^T L_t
\qquad\text{and}\qquad
\frac{1}{T\gamma}\sum_{t=1}^T U_t
\]
as diagnostics. Small values indicate that projection is not materially affecting the long-run
coverage behavior; large values indicate that the algorithm is frequently hitting the finite-set
boundary, so the target coverage may not be attainable without wider or narrower endpoint sets.

\section{Feasibility diagnostics for assumption B}
\label{app:fea_dia}
Assumption~B requires a fixed mixture \(u^\star\in\Delta_K\) satisfying
\(\langle e_t,u^\star\rangle\le \alpha\) on every round. Since OLCP-Hedge uses full-information
feedback, this condition can be checked post hoc by the linear feasibility problem
\[
\text{find } u\in\Delta_K
\quad\text{such that}\quad
\langle e_t,u\rangle\le \alpha,\qquad t=1,\dots,T.
\]
Equivalently, one may solve the relaxed linear program
\[
\min_{u\in\Delta_K,\ \rho\ge0}\ \rho
\qquad
\text{subject to}
\qquad
\langle e_t,u\rangle-\alpha\le \rho,\quad t=1,\dots,T.
\]
If the optimum is \(\rho=0\), Assumption~B holds for the realized expert pool; otherwise \(\rho\)
quantifies the worst-round infeasibility of the best fixed mixture. When it's large, the uniform-feasibility comparator required by Theorem~\ref{thm:sv-olcp} is inappropriate, and a
relaxed online constraint-satisfaction benchmark such as \(S\)-feasibility or \(P_T\)-constrained
feasibility is more suitable.

\section{AdaHedge subroutine}
\label{app:adahedge}

We use AdaHedge as the black-box OCO subroutine over the probability simplex
\(\Delta_K\). Let
\[
  C_{\mathrm{AH}} := 2\sqrt{4+\ln K}.
\]

\begin{algorithm}[h]
\caption{AdaHedge on \(\Delta_K\) (FTRL form; \citep{orabona2019modern})}
\label{alg:adahedge-app}
\begin{algorithmic}[1]
\Require parameter \(\alpha_{\mathrm{AH}}>0\)
\State Initialize
\[
\lambda_1\gets 0,\qquad
p_1\gets (1/K,\dots,1/K)\in\Delta_K,\qquad
\theta_1\gets 0\in\mathbb R^K .
\]
\For{\(t=1,2,\dots,T\)}
  \State Output \(p_t\in\Delta_K\).
  \State Observe a linear loss vector \(\xi_t\in\mathbb R^K\) and incur loss
  \(\langle \xi_t,p_t\rangle\).
  \State Update \(\theta_{t+1}\gets \theta_t-\xi_t\).
  \If{\(t=1\)}
    \State \(\delta_t\gets \langle \xi_1,p_1\rangle-\min_{j\in[K]}\xi_{1,j}\).
  \Else
    \State
    \[
    \delta_t
    \gets
    \lambda_t
    \ln\!\left(
      \sum_{j=1}^K p_{t,j}\exp(-\xi_{t,j}/\lambda_t)
    \right)
    +
    \langle \xi_t,p_t\rangle .
    \]
  \EndIf
  \State Update \(\lambda_{t+1}\gets \lambda_t+\delta_t/\alpha_{\mathrm{AH}}^2\).
  \State Update
  \[
  p_{t+1,j}
  \propto
  \exp(\theta_{t+1,j}/\lambda_{t+1}),
  \qquad j=1,\dots,K .
  \]
\EndFor
\end{algorithmic}
\end{algorithm}

\begin{theorem}[AdaHedge bound on \(\Delta_K\)]
\label{thm:adahedge-app}
Run Algorithm~\ref{alg:adahedge-app} with
\[
\alpha_{\mathrm{AH}}=\sqrt{\ln K}.
\]
Then for any sequence \(\{\xi_t\}_{t=1}^T\subset\mathbb R^K\), the iterates
\(p_t\in\Delta_K\) satisfy, for all \(u\in\Delta_K\),
\[
  \sum_{t=1}^T \langle \xi_t,p_t-u\rangle
  \le
  C_{\mathrm{AH}}
  \sqrt{\sum_{t=1}^T \|\xi_t\|_\infty^2}.
\]
\end{theorem}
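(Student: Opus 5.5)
The plan is to prove Theorem~\ref{thm:adahedge-app} by the standard AdaHedge analysis of \citep{derooij2014follow,orabona2019modern}: split each round's loss into a ``mix loss'' plus a ``mixability gap'' $\delta_t$, show the regret is at most $2\Delta_T$ with $\Delta_T:=\sum_{t\le T}\delta_t$, and then bound $\Delta_T$ through a quadratic recursion. The constant $C_{\mathrm{AH}}=2\sqrt{4+\ln K}$ should come out exactly from this recursion.

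\textbf{Step 1 (regret decomposition).} Write $L_{t,j}:=\sum_{s\le t}\xi_{s,j}=-\theta_{t+1,j}$. Since the update is $p_{t,j}\propto \exp(-L_{t-1,j}/\lambda_t)$, we have $\lambda_t=\Delta_{t-1}/\ln K$ for $t\ge 2$. Define the mix loss
\[
m_t:=-\lambda_t\ln\sum_j p_{t,j}e^{-\xi_{t,j}/\lambda_t},
\]
so that $\langle\xi_t,p_t\rangle=m_t+\delta_t$ for $t\ge 2$. I will interpret $\lambda_t=0$ as follow-the-leader and take $m_t$ to be the limiting value $\min_j \xi_{t,j}$ on such rounds, which is consistent with the definition of $\delta_1$. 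Next introduce the potential
\[
M_t(\lambda):=-\lambda\ln\Bigl(\tfrac1K\sum_j e^{-L_{t,j}/\lambda}\Bigr).
\]
It satisfies $m_t=M_t(\lambda_t)-M_{t-1}(\lambda_t)$ and $\min_j L_{t,j}\le M_t(\lambda)\le \min_j L_{t,j}+\lambda\ln K$. It is also nonincreasing in $\lambda$, because the power mean $\bigl(\frac1K\sum_j a_j^{1/\lambda}\bigr)^{\lambda}$ with $a_j=e^{-L_{t,j}}$ is nonincreasing as $1/\lambda$ decreases. Telescoping with the nondecreasing sequence $\lambda_t$ then gives
\[
\sum_t m_t\le M_T(\lambda_{T+1})\le \min_j L_{T,j}+\lambda_{T+1}\ln K .
\]
Hence for every vertex $u$, and so for every $u\in\Delta_K$ by linearity,
\[
\sum_t\langle\xi_t,p_t-u\rangle\le \lambda_{T+1}\ln K+\Delta_T = 2\Delta_T .
\]

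\textbf{Step 2 (bounds on the gap).} I will establish two bounds on $\delta_t$. First, $0\le\delta_t\le \max_j\xi_{t,j}-\min_j\xi_{t,j}\le 2\|\xi_t\|_\infty$. The lower bound follows from Jensen's inequality, and the upper bound holds because $m_t\ge\min_j\xi_{t,j}$. Second, Hoeffding's lemma applied to the random variable $\xi_{t,J}$ with $J\sim p_t$ gives
\[
\delta_t\le \frac{(\text{range})^2}{8\lambda_t}\le \frac{\|\xi_t\|_\infty^2}{2\lambda_t}=\frac{\ln K\,\|\xi_t\|_\infty^2}{2\Delta_{t-1}} .
\]

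\textbf{Step 3 (recursion).} Expand
\[
\Delta_T^2=\sum_t(\Delta_t^2-\Delta_{t-1}^2)=\sum_t\bigl(2\delta_t\Delta_{t-1}+\delta_t^2\bigr).
\]
Bound $2\delta_t\Delta_{t-1}\le \ln K\,\|\xi_t\|_\infty^2$ using the Hoeffding bound; this term is trivially zero when $\Delta_{t-1}=0$. Bound $\delta_t^2\le 4\|\xi_t\|_\infty^2$ using the range bound. This yields $\Delta_T\le\sqrt{(4+\ln K)\sum_t\|\xi_t\|_\infty^2}$, and combining with Step 1 gives regret at most $2\sqrt{4+\ln K}\,\sqrt{\sum_t\|\xi_t\|_\infty^2}=C_{\mathrm{AH}}\sqrt{\sum_t\|\xi_t\|_\infty^2}$.

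The main obstacle is Step 1 with a time-varying learning rate that starts at $\lambda_1=0$. Two things need care there: the monotonicity of $M_t$ in $\lambda$ (the direction must be the one that makes telescoping with increasing $\lambda_t$ go the right way), and a consistent treatment of follow-the-leader rounds where $\lambda_t=0$. For the latter I would check that the limiting definitions of $m_t$ and $\delta_t$ match the algorithm's $\delta_1$ and still satisfy $\delta_t\le 2\|\xi_t\|_\infty$.
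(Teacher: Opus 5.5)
Your overall route is the standard AdaHedge argument behind the bound, which the paper only cites from Orabona without reproducing. It runs: mix loss plus mixability gap, regret at most $\lambda_{T+1}\ln K+\Delta_T=2\Delta_T$, then the recursion $\Delta_T^2=\sum_t(2\delta_t\Delta_{t-1}+\delta_t^2)\le(4+\ln K)\sum_t\|\xi_t\|_\infty^2$. Your Steps~2 and~3 are correct and give exactly $C_{\mathrm{AH}}=2\sqrt{4+\ln K}$.

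The problem is in Step~1: the monotonicity you assert is backwards, and with it the telescoping fails. Write $M_t(\lambda)=-\ln\bigl((\frac1K\sum_j a_j^{1/\lambda})^{\lambda}\bigr)$, which is \emph{minus} the log of the power mean. That power mean decreases as $\lambda$ grows, so $M_t$ is \emph{nondecreasing} in $\lambda$. It runs from $\min_j L_{t,j}$ as $\lambda\to0$ to the average $\frac1K\sum_j L_{t,j}$ as $\lambda\to\infty$; for example, $L_t=(0,1)$ gives $M_t(1)\approx0.38$. Now, using $M_0\equiv0$, write
\[
\sum_{t=1}^T m_t=M_T(\lambda_T)+\sum_{t=1}^{T-1}\bigl(M_t(\lambda_t)-M_t(\lambda_{t+1})\bigr).
\]
Each bracket is $\le0$ precisely because $\lambda_t\le\lambda_{t+1}$ and $M_t$ is nondecreasing. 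Under the monotonicity you stated, the brackets would be $\ge0$, giving only a lower bound on $\sum_t m_t$, and your last step $M_T(\lambda_T)\le M_T(\lambda_{T+1})$ would also go the wrong way. With the direction corrected you get
\[
\sum_t m_t\le M_T(\lambda_T)\le\min_jL_{T,j}+\lambda_T\ln K\le\min_jL_{T,j}+\lambda_{T+1}\ln K,
\]
and the rest of your proof goes through.

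Your other flagged concern, the follow-the-leader rounds, is settled by noting they occur only while all past losses are constant across experts. Indeed $\lambda_t=\Delta_{t-1}/\ln K=0$ forces $\delta_1=\dots=\delta_{t-1}=0$. Argue by induction: $p_1$ is uniform, and if $p_s$ is uniform with $\lambda_s=0$, then $\delta_s=\langle\xi_s,p_s\rangle-\min_j\xi_{s,j}$ vanishes only when $\xi_s$ is a constant vector. That keeps all cumulative losses tied, so $p_{s+1}$ is uniform, which is how $p_{s+1}\propto\exp(\theta_{s+1}/0)$ must be read.

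Hence on these rounds your convention $m_t=\min_j\xi_{t,j}$ agrees both with the fixed-weight limit and with $M_t(0)-M_{t-1}(0)$, where $M_t(0):=\min_jL_{t,j}$. So the identity $m_t=M_t(\lambda_t)-M_{t-1}(\lambda_t)$, the bound $\delta_t\le2\|\xi_t\|_\infty$, and the vanishing of $2\delta_t\Delta_{t-1}$ all hold there. Since $\delta_t\ge0$, once $\lambda_t>0$ it stays positive, so these rounds form an initial segment and Hoeffding is only needed afterwards.
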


\begin{proof}
This is the parameter-free AdaHedge guarantee obtained by optimizing the standard
FTRL/AdaHedge bound; see \cite[Section~7]{orabona2019modern}. We use the
\(\ell_\infty\)-norm form because the decision set is the simplex and the losses are
linear in \(p\).
\end{proof}

\section{Proof of Theorem~\ref{thm:sv-olcp}}
\label{app:sv-proof}

\begin{proof}
We prove a slightly stronger prefix version: the stated bounds hold for every
\(m\le T\). The proof follows the Sinha--Vaze Lyapunov reduction for constrained online
convex optimization, specialized to the simplex \(\Delta_K\) and to the OLCP expert set size
and miscoverage constraint.

Recall that
\[
  f_t(p)=\langle \omega_t,p\rangle,
  \qquad
  g_t(p)=\langle e_t,p\rangle-\alpha,
\]
where \(\omega_t=(\omega_{t,1},\dots,\omega_{t,K})\) is the vector of expert sizes and
\(e_t=(\mathrm{err}_{t,1},\dots,\mathrm{err}_{t,K})\) is the vector of expert miscoverage indicators. We use the
preprocessed size loss and excess-miscoverage constraint
\[
  \tilde f_t(p):=\kappa f_t(p),
  \qquad
  \tilde g_t(p):=\kappa(g_t(p))_+ .
\]
The virtual queue is initialized at \(\mathcal Q(0)=0\) and updated as
\[
  \mathcal Q(t):=\mathcal Q(t-1)+\tilde g_t(p_t).
\]
The potential is
\[
  \Phi(q)=e^{\lambda q}-1,
  \qquad
  \Phi'(q)=\lambda e^{\lambda q},
\]
and the surrogate loss is
\[
  \hat f_t(p)
  :=
  V\tilde f_t(p)+\Phi'(\mathcal Q(t))\tilde g_t(p).
\]
In Theorem~\ref{thm:sv-olcp}, we set \(V=1\).

\pfstep{Step 1: Drift inequality and decomposition.}
Fix any prefix length \(m\le T\). By convexity of \(\Phi\) and the queue recursion
\[
\mathcal Q(t)=\mathcal Q(t-1)+\tilde g_t(p_t),
\]
we have
\[
  \Phi(\mathcal Q(t))-\Phi(\mathcal Q(t-1))
  \le
  \Phi'(\mathcal Q(t))\,\tilde g_t(p_t).
\]
Summing from \(t=1\) to \(m\) and using \(\Phi(\mathcal Q(0))=0\), we get
\[
  \Phi(\mathcal Q(m))
  \le
  \sum_{t=1}^m \Phi'(\mathcal Q(t))\,\tilde g_t(p_t).
\]
Let \(u^\star\in\Delta_K\) be any feasible comparator from Assumption~B. Then
\[
g_t(u^\star)\le 0
\qquad\Rightarrow\qquad
(g_t(u^\star))_+=0
\qquad\Rightarrow\qquad
\tilde g_t(u^\star)=0 .
\]
Since \(V=1\),
\[
\begin{aligned}
  \sum_{t=1}^m
  \bigl(\hat f_t(p_t)-\hat f_t(u^\star)\bigr)
  &=
  \sum_{t=1}^m
  \Bigl(\tilde f_t(p_t)-\tilde f_t(u^\star)\Bigr)
  +
  \sum_{t=1}^m
  \Phi'(\mathcal Q(t))\,\tilde g_t(p_t).
\end{aligned}
\]
Therefore,
\begin{equation}
  \Phi(\mathcal Q(m))
  +
  \sum_{t=1}^m
  \Bigl(\tilde f_t(p_t)-\tilde f_t(u^\star)\Bigr)
  \le
  \sum_{t=1}^m
  \bigl(\hat f_t(p_t)-\hat f_t(u^\star)\bigr).
  \label{eq:sv-decomp-olcp-app}
\end{equation}

\pfstep{Step 2: Apply AdaHedge to the linearized surrogate losses.}
For each \(t\), choose any
\[
  \xi_t\in\partial \hat f_t(p_t).
\]
By convexity,
\[
  \hat f_t(p_t)-\hat f_t(u^\star)
  \le
  \langle \xi_t,p_t-u^\star\rangle .
\]
Thus, by Theorem~\ref{thm:adahedge-app} applied to the prefix \(1,\dots,m\),
\begin{equation}
  \sum_{t=1}^m
  \bigl(\hat f_t(p_t)-\hat f_t(u^\star)\bigr)
  \le
  C_{\mathrm{AH}}
  \sqrt{\sum_{t=1}^m \|\xi_t\|_\infty^2}.
  \label{eq:ah-apply-olcp-app}
\end{equation}

\pfstep{Step 3: Bound the surrogate gradients.}
Since
\[
  \hat f_t(p)
  =
  \tilde f_t(p)+\Phi'(\mathcal Q(t))\tilde g_t(p),
\]
and
\[
  \tilde f_t(p)=\kappa f_t(p),
  \qquad
  \tilde g_t(p)=\kappa(g_t(p))_+,
\]
Assumption~A implies that we may choose \(\xi_t\in\partial \hat f_t(p_t)\) such that
\[
  \|\xi_t\|_\infty
  \le
  \kappa G\bigl(1+\Phi'(\mathcal Q(t))\bigr).
\]
Hence, using \((a+b)^2\le 2(a^2+b^2)\),
\[
  \sum_{t=1}^m \|\xi_t\|_\infty^2
  \le
  2\kappa^2G^2
  \left(
    m+\sum_{t=1}^m(\Phi'(\mathcal Q(t)))^2
  \right).
\]
Because \(\mathcal Q(t)\) is nondecreasing and \(\Phi'\) is nondecreasing,
\[
  \Phi'(\mathcal Q(t))\le \Phi'(\mathcal Q(m)),
  \qquad t\le m.
\]
Therefore,
\[
  \sum_{t=1}^m(\Phi'(\mathcal Q(t)))^2
  \le
  m(\Phi'(\mathcal Q(m)))^2.
\]
Plugging this into \eqref{eq:ah-apply-olcp-app} gives
\begin{equation}
  \sum_{t=1}^m
  \bigl(\hat f_t(p_t)-\hat f_t(u^\star)\bigr)
  \le
  \sqrt{2}\,C_{\mathrm{AH}}\kappa G\sqrt m
  \bigl(1+\Phi'(\mathcal Q(m))\bigr).
  \label{eq:surrogate-reg-olcp-app}
\end{equation}

\pfstep{Step 4: Size-regret bound.}
Combining \eqref{eq:sv-decomp-olcp-app} and
\eqref{eq:surrogate-reg-olcp-app}, and using
\[
\Phi(q)=e^{\lambda q}-1,
\qquad
\Phi'(q)=\lambda e^{\lambda q},
\]
yields
\[
  e^{\lambda \mathcal Q(m)}-1
  +
  \sum_{t=1}^m
  \bigl(\tilde f_t(p_t)-\tilde f_t(u^\star)\bigr)
  \le
  \sqrt{2}\,C_{\mathrm{AH}}\kappa G\sqrt m
  \bigl(1+\lambda e^{\lambda \mathcal Q(m)}\bigr).
\]
With
\[
  \kappa=(\sqrt{2}C_{\mathrm{AH}}G)^{-1},
\]
the prefactor is equal to \(1\), so
\[
  e^{\lambda \mathcal Q(m)}-1
  +
  \sum_{t=1}^m
  \bigl(\tilde f_t(p_t)-\tilde f_t(u^\star)\bigr)
  \le
  \sqrt m+\lambda\sqrt m\,e^{\lambda \mathcal Q(m)}.
\]
Rearranging,
\[
  \sum_{t=1}^m
  \bigl(\tilde f_t(p_t)-\tilde f_t(u^\star)\bigr)
  \le
  \sqrt m+1+(\lambda\sqrt m-1)e^{\lambda \mathcal Q(m)}.
\]
With
\[
  \lambda=\frac{1}{2\sqrt T}
  \qquad\text{and}\qquad
  m\le T,
\]
we have \(\lambda\sqrt m\le 1/2\), so the last term is nonpositive. Therefore,
\[
  \sum_{t=1}^m
  \bigl(\tilde f_t(p_t)-\tilde f_t(u^\star)\bigr)
  \le
  \sqrt m+1
  \le
  2\sqrt m .
\]
Since \(\tilde f_t=\kappa f_t\),
\[
  \sum_{t=1}^m
  \bigl(f_t(p_t)-f_t(u^\star)\bigr)
  \le
  2\kappa^{-1}\sqrt m.
\]
Using
\[
  \kappa^{-1}=\sqrt{2}\,C_{\mathrm{AH}}G
  =
  2G\sqrt{2(4+\ln K)},
\]
we obtain
\[
  \sum_{t=1}^m
  \bigl(f_t(p_t)-f_t(u^\star)\bigr)
  \le
  4G\sqrt{2(4+\ln K)\,m}.
\]
Finally, since
\[
  f_t(p_t)=\mathbb E_{I_t\sim p_t}[\omega_{t,I_t}],
  \qquad
  f_t(u^\star)=\langle \omega_t,u^\star\rangle,
\]
we have, for every \(m\le T\),
\[
  \sum_{t=1}^m
  \Bigl(
    \mathbb E_{I_t\sim p_t}[\omega_{t, I_t}]
    -
    \langle \omega_t,u^\star\rangle
  \Bigr)
  \le
  4G\sqrt{2(4+\ln K)\,m}.
\]

\pfstep{Step 5: Cumulative excess-miscoverage bound.}
By Assumption~A and the fact that the \(\ell_1\)-diameter of \(\Delta_K\) is \(2\),
\[
  \tilde f_t(p_t)-\tilde f_t(u^\star)
  \ge
  -2\kappa G.
\]
Thus,
\[
  \sum_{t=1}^m
  \bigl(\tilde f_t(p_t)-\tilde f_t(u^\star)\bigr)
  \ge
  -2\kappa Gm.
\]
Plugging this lower bound into the inequality from Step 4 before dropping the exponential
term gives
\[
  e^{\lambda \mathcal Q(m)}(1-\lambda\sqrt m)
  \le
  1+\sqrt m+2\kappa Gm.
\]
Therefore,
\[
  \mathcal Q(m)
  \le
  \frac{1}{\lambda}
  \ln\!\left(
    \frac{1+\sqrt m+2\kappa Gm}{1-\lambda\sqrt m}
  \right).
\]
For \(\lambda=1/(2\sqrt T)\) and \(m\le T\),
\[
1-\lambda\sqrt m\ge \frac12,
\]
so
\[
  \mathcal Q(m)
  \le
  \frac{1}{\lambda}
  \ln\!\Bigl(2(1+\sqrt m+2\kappa Gm)\Bigr)
  \le
  \frac{1}{\lambda}
  \ln\!\Bigl(2+(2+4\kappa G)m\Bigr),
\]
where we used \(1+\sqrt m\le 1+m\). Since
\[
  \kappa G=\frac{1}{\sqrt{2}C_{\mathrm{AH}}},
  \qquad
  C_{\mathrm{AH}}=2\sqrt{4+\ln K}\ge 4,
\]
we have
\[
  4\kappa G
  =
  \frac{4}{\sqrt{2}C_{\mathrm{AH}}}
  \le
  \frac{\sqrt 2}{2}.
\]
Hence,
\[
  \mathcal Q(m)
  \le
  \frac{1}{\lambda}
  \ln\!\Bigl(
    2+\bigl(2+\tfrac{\sqrt2}{2}\bigr)m
  \Bigr).
\]
Finally, since
\[
  \mathcal Q(m)
  =
  \sum_{t=1}^m \kappa(g_t(p_t))_+,
\]
we obtain
\[
  \sum_{t=1}^m(g_t(p_t))_+
  =
  \kappa^{-1}\mathcal Q(m)
  \le
  \frac{\kappa^{-1}}{\lambda}
  \ln\!\Bigl(
    2+\bigl(2+\tfrac{\sqrt2}{2}\bigr)m
  \Bigr).
\]
Using
\[
  \frac{\kappa^{-1}}{\lambda}
  =
  4G\sqrt{2(4+\ln K)\,T},
\]
we get, for every \(m\le T\),
\[
  \sum_{t=1}^m(g_t(p_t))_+
  \le
  4G\sqrt{2(4+\ln K)\,T}
  \ln\!\Bigl(
    2+\bigl(2+\tfrac{\sqrt2}{2}\bigr)m
  \Bigr).
\]
Since
\[
  g_t(p_t)
  =
  \mathbb E_{I_t\sim p_t}[\mathrm{err}_{t, I_t}]-\alpha,
\]
this proves
\[
  \sum_{t=1}^m
  \Bigl(
    \mathbb E_{I_t\sim p_t}[\mathrm{err}_{t, I_t}]-\alpha
  \Bigr)_+
  \le
  4G\sqrt{2(4+\ln K)\,T}
  \ln\!\Bigl(
    2+\bigl(2+\tfrac{\sqrt2}{2}\bigr)m
  \Bigr).
\]
The proof is complete.
\end{proof}

\section{Experiments}
\label{app:exp}

\subsection{Methods}
\label{app:method}
This section describes the implementation used in the synthetic and real-data experiments. All
methods are implemented as online wrappers around a base predictor.  In the regression experiments,
\[
S_t = |Y_t-\hat Y_t|.
\]

\paragraph{Online calibration windows.}
Each series is processed independently. Let
\[
\mathcal T_{\mathrm{test}}=(\tau_1,\dots,\tau_{T_{\mathrm{test}}})
\]
be the ordered test indices for a given series. At test position \(j\), the calibration window contains
the previous at most \(R\) test-time observations:
\[
\mathcal I_j
=
\{\tau_{\max(1,j-R)},\dots,\tau_{j-1}\}.
\]
The first test point with an empty calibration window is skipped. This convention ensures that all
calibration is online and uses no future outcomes. 

\paragraph{CP.}
CP uses the finite-sample corrected rolling quantile of the previous calibration scores. If
\(r=|\mathcal I_j|\), then
\[
k=\left\lceil (1-\alpha)(r+1)\right\rceil,
\]
with \(k\) clipped to \(\{1,\dots,r\}\), and
\[
q_t^{\text{CP}}=\text{the }k\text{-th order statistic of }\{S_i:i\in\mathcal I_j\}.
\]
The prediction set is \([\hat Y_t-q_t^{\text{CP}},\hat Y_t+q_t^{\text{CP}}]\).

\paragraph{Localized weighted quantiles.}
LCP and OLCP use weighted empirical quantiles. For a query covariate \(x\), the calibration
covariates in \(\mathcal I_j\) are standardized coordinatewise using their empirical mean and standard
deviation:
\[
Z_i=\frac{X_i-\bar X_{\mathcal I_j}}{\hat\sigma_{\mathcal I_j}},
\qquad
Z_x=\frac{x-\bar X_{\mathcal I_j}}{\hat\sigma_{\mathcal I_j}},
\]
where zero or numerically unstable standard deviations are replaced by \(1\). We then use the
exponential localizer
\[
H_h(x,X_i)=\exp\!\left(-\frac{\|Z_i-Z_x\|_2}{h}\right).
\]
The weighted quantile is computed by sorting calibration scores and accumulating the corresponding
sorted weights. If the total weight is numerically zero, the implementation falls back to uniform
weights.

The base bandwidth is chosen by a Silverman-style rule,
\[
h_0
=
\left(\frac{4}{d+2}\right)^{1/(d+4)}
R^{-1/(d+4)}
\sqrt d,
\]
where \(d\) is the covariate dimension. The factor
\(\left(4/(d+2)\right)^{1/(d+4)}R^{-1/(d+4)}\) is the classical multivariate kernel bandwidth scaling
\citep{silverman1986density,scott1992multivariate}. Because covariates are standardized within each
calibration window before computing Euclidean distances, the additional \(\sqrt d\) factor matches the
typical scale of distances in \(d\) standardized dimensions. OLCP-Hedge further reduces sensitivity
to this heuristic by aggregating a grid of multiplicative bandwidths around \(h_0\).

\paragraph{LCP.}
LCP uses the localized empirical score distribution at the current covariate \(X_t\),
\[
D_t^{(h_0)}(X_t)
=
\sum_{i\in\mathcal I_j}
w_{t,i}^{(h_0)}(X_t)\,\delta_{S_i},
\qquad
w_{t,i}^{(h_0)}(X_t)
=
\frac{H_{h_0}(X_t,X_i)}
{\sum_{r\in\mathcal I_j}H_{h_0}(X_t,X_r)} .
\]
The fixed-level localized radius is
\[
q_t^{\text{LCP}}
=
Q\!\left(1-\alpha;\,D_t^{(h_0)}(X_t)\right),
\]
and the prediction set is
\[
[\hat Y_t-q_t^{\text{LCP}},\ \hat Y_t+q_t^{\text{LCP}}].
\]

\paragraph{ACI.}
ACI uses the rolling unweighted empirical score distribution, but replaces the fixed nominal level
\(\alpha\) by an adaptive level \(\alpha_t\). For a test stream of length \(T_{\mathrm{test}}\), the default
step size is
\[
\gamma=\frac{1}{2\sqrt{T_{\mathrm{test}}}}.
\]
At time \(t\), ACI forms the rolling quantile at level \(1-\alpha_t\). After observing \(Y_t\), it updates
\[
\alpha_{t+1}
=
\Pi_{[0,1]}\bigl(\alpha_t+\gamma(\alpha-\mathrm{err}_t)\bigr),
\qquad
\mathrm{err}_t=\mathbf 1\{Y_t\notin \widehat C_t\},
\]
with initialization \(\alpha_1=\alpha\). Here we project the 
$\alpha_t$ back to $[0,1]$ in the experiment to prevent infinite prediction sets for a better comparison.

\paragraph{DtACI.}
DtACI aggregates multiple ACI experts with different step sizes. Let
\[
\gamma_0=\frac{1}{2\sqrt{T_{\mathrm{test}}}},
\qquad
\Gamma=\{0.25,0.5,0.75,1,1.25,1.5\}\gamma_0.
\]
Each expert \(r\) maintains its own adaptive level \(\alpha_t^{(r)}\). The mixture level is
\[
\alpha_t^{\mathrm{DtACI}}=\sum_r p_{t,r}\alpha_t^{(r)}.
\]
At time \(t\), the method computes the empirical rank statistic
\[
\beta_t
=
\frac{1}{|\mathcal I_j|}
\sum_{i\in\mathcal I_j}\mathbf 1\{S_i\ge S_t\}.
\]
Expert weights are updated using exponential weights on the pinball loss
\[
\ell(\beta_t,\alpha_t^{(r)})
=
\alpha(\beta_t-\alpha_t^{(r)})-\min\{0,\beta_t-\alpha_t^{(r)}\}.
\]
Following \cite{gibbs2024conformal}, the implementation uses
\[
I_{\mathrm{size}}=500,
\qquad
\sigma_{\mathrm{dt}}=\frac{1}{2I_{\mathrm{size}}},
\]
and
\[
\eta_{\mathrm{dt}}
=
\sqrt{\frac{3}{I_{\mathrm{size}}}}\,
\sqrt{
\frac{\log(I_{\mathrm{size}}|\Gamma|)+2}
{
\bigl((1-\alpha)^2\alpha^3+\alpha^2(1-\alpha)^3\bigr)/3
}
}.
\]
The expert levels are then updated by their own projected ACI recursions.

\paragraph{SPCI.}
SPCI is implemented as a residual-forecasting conformal baseline. Let
\[
\varepsilon_t=Y_t-\hat Y_t
\]
denote the base-model residual on the online test stream. For each test time, we form a lag vector of
the previous \(w_{\mathrm{lag}}\) residuals and fit a quantile random forest to predict the next residual.
The default parameters are
\[
w_{\mathrm{lag}}=24,
\qquad
T_{\mathrm{train}}=R,
\qquad
\texttt{refit\_every}=24,
\qquad
\texttt{beta\_grid\_size}=21.
\]
The quantile random forest uses
\[
\texttt{n\_estimators}=80,\quad
\texttt{max\_depth}=10,\quad
\texttt{min\_samples\_leaf}=5,\quad
\texttt{random\_state}=42,\quad
\texttt{n\_jobs}=-1.
\]
For a grid
\[
\beta\in\left\{0,\frac{\alpha}{20},\frac{2\alpha}{20},\dots,\alpha\right\},
\]
SPCI predicts residual quantiles at levels \(\beta\) and \(1-\alpha+\beta\), and selects the value
of \(\beta\) giving the narrowest prediction set. The final prediction set is
\[
[\hat Y_t+\hat q_{\beta},\ \hat Y_t+\hat q_{1-\alpha+\beta}].
\]

\paragraph{OLCP.}
OLCP uses the localized family \(C_t^{(h)}(X_t;\beta)\) from Section~\ref{sec:olcp}. In the
experiments, \(h=h_0\). At time \(t\), OLCP forms
\[
\widehat C_t
=
C_t^{(h_0)}(X_t;\alpha_t),
\]
where the quantile is the localized weighted quantile at level \(1-\alpha_t\). After observing \(Y_t\),
OLCP updates
\[
\alpha_{t+1}
=
\Pi_{[0,1]}\bigl(\alpha_t+\gamma(\alpha-\mathrm{err}_t)\bigr),
\qquad
\mathrm{err}_t=\mathbf 1\{Y_t\notin \widehat C_t\},
\]
with
\[
\gamma=\frac{1}{2\sqrt{T_{\mathrm{test}}}}.
\]

\paragraph{OLCP-Hedge.}
OLCP-Hedge aggregates OLCP experts over the bandwidth grid
\[
h_i\in\{0.5,0.75,1,1.25,1.5\}h_0,
\qquad i=1,\dots,5.
\]
Expert \(i\) maintains its own adaptive level \(\alpha_{t,i}\) and outputs
\[
C_{t,i}(X_t)=C_t^{(h_i)}(X_t;\alpha_{t,i}).
\]
In regression, the set-size functional is interval width, so
\[
\omega_{t,i}=\operatorname{width}(C_{t,i}(X_t)),
\]
and the expert miscoverage indicator is
\[
e_{t,i}:=\mathbf 1\{Y_t\notin C_{t,i}(X_t)\}.
\]
Each expert level is updated by
\[
\alpha_{t+1,i}
=
\Pi_{[0,1]}\bigl(\alpha_{t,i}+\gamma(\alpha-e_{t,i})\bigr),
\]
with
\[
\gamma=\frac{1}{2\sqrt{T_{\mathrm{test}}}}.
\]

At round $t$, the expert cost is the prediction set size $\omega_{t, i}$, min-max normalized to $[0,1]$ across experts, and all other parameters are chosen according to Section~\ref{sec:sv-olcp-adahedge} with $G = 1$.

\subsection{Running time on experiments}
\label{app:running_time}

All experiments were run on a MacBook Pro. Neural network predictors were trained using Apple's MPS backend when available. The conformal runtime table was measured on the same machine and reports only the online conformal calibration/evaluation step; it excludes data loading, base predictor training, forecast precomputation.

\begin{table}[h]
\centering
\caption{Running time comparison. Entries report wall-clock time in seconds for the conformal calibration/evaluation step only. For simulation, we report the mean running time across 100 repetitions.}
\label{tab:runtime}
\small
\setlength{\tabcolsep}{4.5pt}
\begin{tabular}{lcccccc}
\toprule
Method
& Sim. A
& Sim. B
& Sim. C
& ELEC2
& ILINet
& ETF volatility \\
\midrule
CP
& 0.02 & 0.02 & 0.02 & 0.04 & 0.01 & 0.13 \\
LCP
& 0.04 & 0.04 & 0.04 & 0.21 & 0.01 & 0.54 \\
ACI
& 0.01 & 0.01 & 0.01 & 0.04 & 0.01 & 0.10 \\
DtACI
& 0.02 & 0.02 & 0.02 & 0.11 & 0.01 & 0.20 \\
SPCI
& 19.51 & 19.52 & 19.52 & 136.91 & 16.79 & 208.10 \\
\midrule
OLCP
& 0.05 & 0.05 & 0.05 & 0.26 & 0.01 & 0.41 \\
OLCP-Hedge
& 0.09 & 0.09 & 0.09 & 0.60 & 0.04 & 0.75 \\
\bottomrule
\end{tabular}
\end{table}

\subsection{Additional details and diagnostics for real-data experiments}
\label{app:real-data}

This section provides implementation details and additional diagnostics for the real-data experiments
in Section~\ref{sec:realdata}. We evaluate the same seven methods as in the synthetic experiments:
CP, LCP, ACI, DtACI, SPCI, OLCP, and OLCP-Hedge. All methods are evaluated online with target
miscoverage \(\alpha=0.1\) using rolling calibration windows. Reported sizes correspond to interval
width for ELEC2 and ILINet; for ETF volatility, sizes are multiplied by \(100\) and reported in
percentage points of absolute log return.

\paragraph{Implementation details.}
All experiments use a fixed point predictor followed by online conformal calibration. The point
predictor is trained only on the training split and is not updated during conformal evaluation. The
conformal methods use symmetric intervals of the form
\[
[\widehat y_t-q_t,\widehat y_t+q_t],
\]
with conformity score \(S_t=|Y_t-\widehat y_t|\). SPCI is implemented as a residual-autoregressive
baseline using a sliding training window comparable to the conformal calibration window.

\begin{itemize}
    \item \textbf{ELEC2.}
    ELEC2 contains electricity market prices, demands, and transfers from New South Wales and
    Victoria \citep{harries1999splice}. We use the normalized ELEC2 file and remove the initial
    constant-response segment, leaving \(27{,}552\) observations. The response is electricity transfer,
    and the covariates are \texttt{nswprice}, \texttt{nswdemand}, \texttt{vicprice}, and
    \texttt{vicdemand}. We keep the full half-hourly sequence. The first \(70\%\) of observations are
    used to train a fixed gradient-boosted regression tree predictor, implemented as
    \texttt{HistGradientBoostingRegressor} with maximum depth \(6\), learning rate \(0.05\), \(400\)
    boosting iterations, and random seed \(42\). Conformal methods are
    evaluated on the remaining \(30\%\) of the sequence with calibration window \(R=100\). For
    localized methods, the localization feature is the four-dimensional covariate vector above. For
    SPCI, we use residual lag length \(24\), training window \(R\), refit frequency \(24\), and \(21\)
    candidate \(\beta\)-values.

    \item \textbf{ILINet.}
    ILINet is a weekly CDC influenza-like illness surveillance dataset \citep{cdcFluView,darts}. We use
    the weighted ILI component, which contains \(1{,}305\) weekly observations from October 1997 to
    October 2022. Missing values are filled by interpolation followed by forward/backward filling.
    The series is split chronologically into \(70\%\) training, \(10\%\) validation, and \(20\%\) testing,
    giving \(913\), \(130\), and \(262\) observations, respectively. The response is standardized using
    the training mean and standard deviation, and intervals are constructed on this standardized scale.
    The base predictor is a temporal convolutional network (TCN)
    \citep{lea2016temporalconvolutionalnetworksunified} with input length \(26\), output length \(1\),
    batch size \(32\), at most \(200\) epochs, kernel size \(5\), \(8\) filters, dilation base \(2\),
    dropout \(0.2\), and Adam learning rate \(10^{-3}\). We use early stopping on validation loss with
    patience \(3\), minimum improvement \(10^{-3}\), gradient clipping at \(0.1\), and the best
    checkpoint. One-step forecasts are computed by historical forecasting with
    \texttt{forecast\_horizon=1}, \texttt{stride=1}, and no retraining. For localized methods, \(X_t\)
    is the lag window of the previous \(26\) standardized ILI values. The calibration window is
    \(R=52\). For SPCI, we use residual lag length \(8\), training window \(R\), refit frequency \(1\),
    and \(21\) candidate \(\beta\)-values.

    \item \textbf{ETF volatility.}
    We forecast daily volatility proxies for five ETFs: SPY, QQQ, IWM, EEM, and TLT \citep{etf}.
    Daily closing prices are read from downloaded Stooq files, and the VIX index is obtained from
    FRED/CBOE \citep{cboeVIX}. The data are aligned to a business-day grid by forward-filling prices
    and VIX values. The response is the absolute daily log return
    \[
        Y_t = |\log P_t-\log P_{t-1}|,
    \]
    and the sample runs from January 2008 to March 2026. Each ETF series has \(4{,}742\)
    business-day observations after alignment. We use a chronological split with training through
    2018, validation over 2019, and testing from 2020 onward, giving \(2{,}868\), \(261\), and
    \(1{,}613\) observations per asset. Each ETF volatility series is standardized using its own
    training mean and standard deviation; VIX is standardized using the VIX training split. The base
    predictor is a TCN trained jointly on the five standardized ETF volatility series, with input length
    \(30\), output length \(1\), batch size \(256\), at most \(80\) epochs, kernel size \(5\), \(8\) filters,
    dilation base \(2\), dropout \(0.2\), and Adam learning rate \(10^{-3}\). We include cyclic calendar
    encoders for day of week and month. Early stopping monitors validation loss with patience \(5\),
    minimum improvement \(10^{-3}\), gradient clipping at \(0.1\), and the best checkpoint. For
    localized methods, \(X_t\) consists of the previous \(30\) standardized volatility values
    concatenated with the lagged standardized VIX value, so the localization dimension is
    \(31\). The calibration window is \(R=200\). Since the TCN is trained on standardized responses,
    widths are converted back to the original absolute-log-return scale using the asset-specific
    training standard deviation and reported as \(100\times\) width. For SPCI, we use residual lag
    length \(30\), training window \(R\), refit frequency \(5\), and \(21\) candidate \(\beta\)-values.
\end{itemize}

\paragraph{Rolling diagnostics.}
Figures~\ref{fig:elec2-roll}--\ref{fig:etf-roll} show rolling coverage and rolling average size on the
three real datasets. In each figure, the top panel shows rolling coverage and the bottom panel shows
rolling average size. A desirable method stays close to the dashed \(0.90\) coverage line in the top
panel while having a lower curve in the bottom panel.

\begin{figure}[H]
  \centering
  \includegraphics[width=0.95\linewidth]{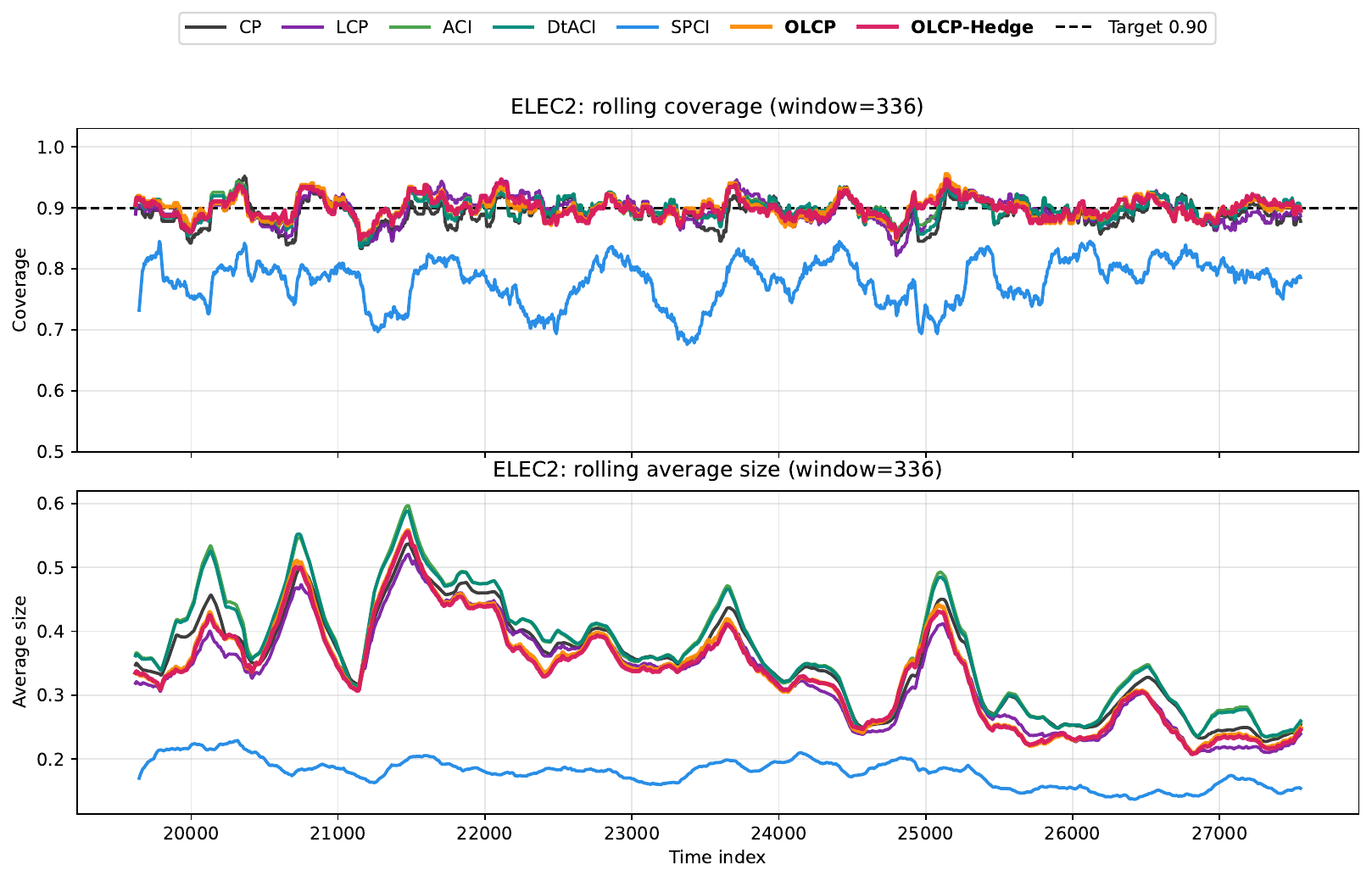}
  \caption{\textbf{ELEC2 rolling diagnostics.}
  Top: rolling coverage using a one-week window (\(48\times 7\) half-hourly observations).
  Bottom: rolling average interval size using the same window. The dashed line marks target coverage
  \(0.90\).}
  \label{fig:elec2-roll}
\end{figure}

\begin{figure}[H]
  \centering
  \includegraphics[width=0.95\linewidth]{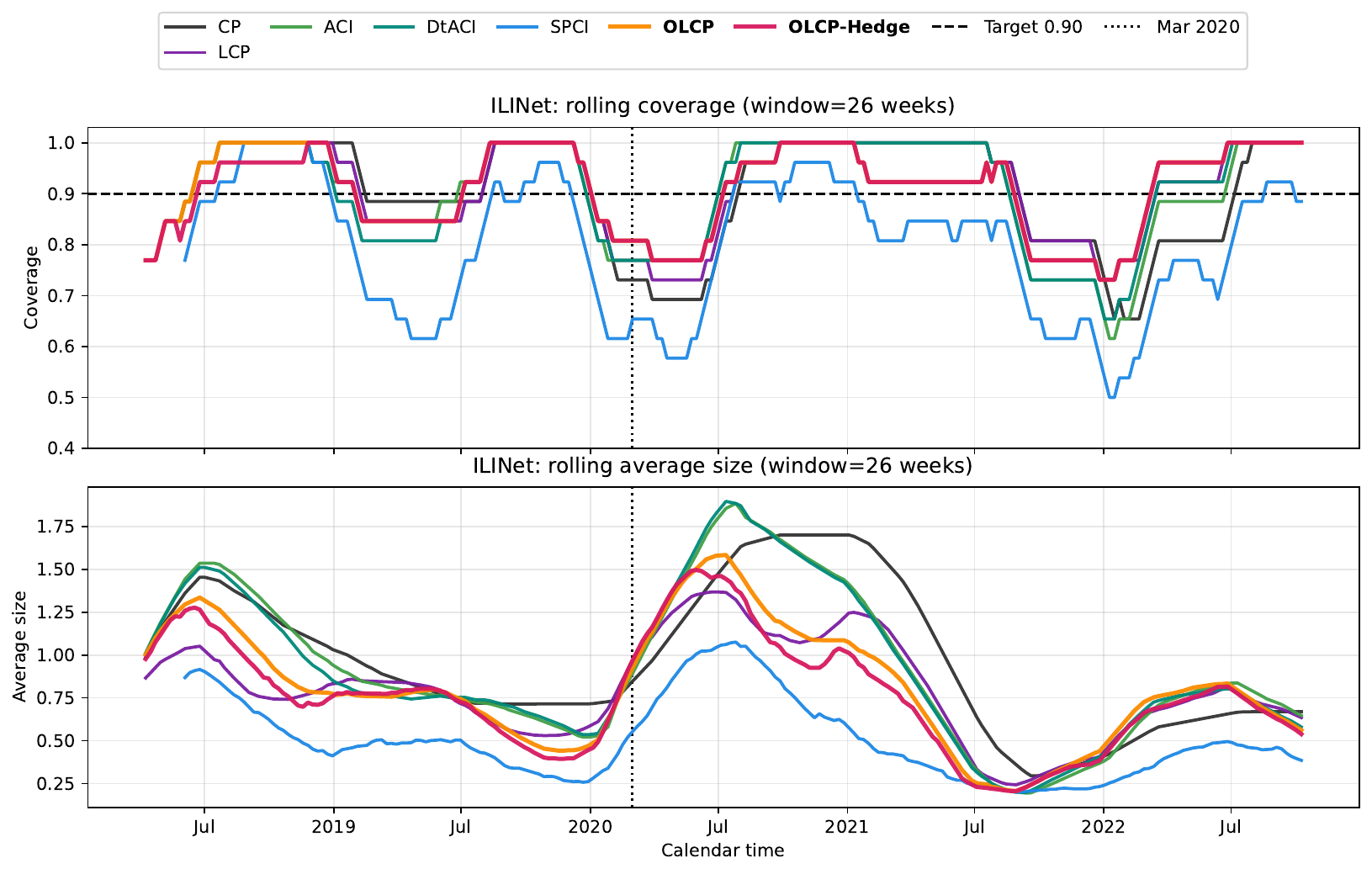}
  \caption{\textbf{ILINet rolling diagnostics.}
  Top: rolling coverage over weekly test observations. Bottom: rolling average interval size. The horizontal dashed
  line marks target coverage \(0.90\) while the vertical line marks the start of COVID.}
  \label{fig:ili-roll}
\end{figure}

\begin{figure}[H]
  \centering
  \includegraphics[width=0.95\linewidth]{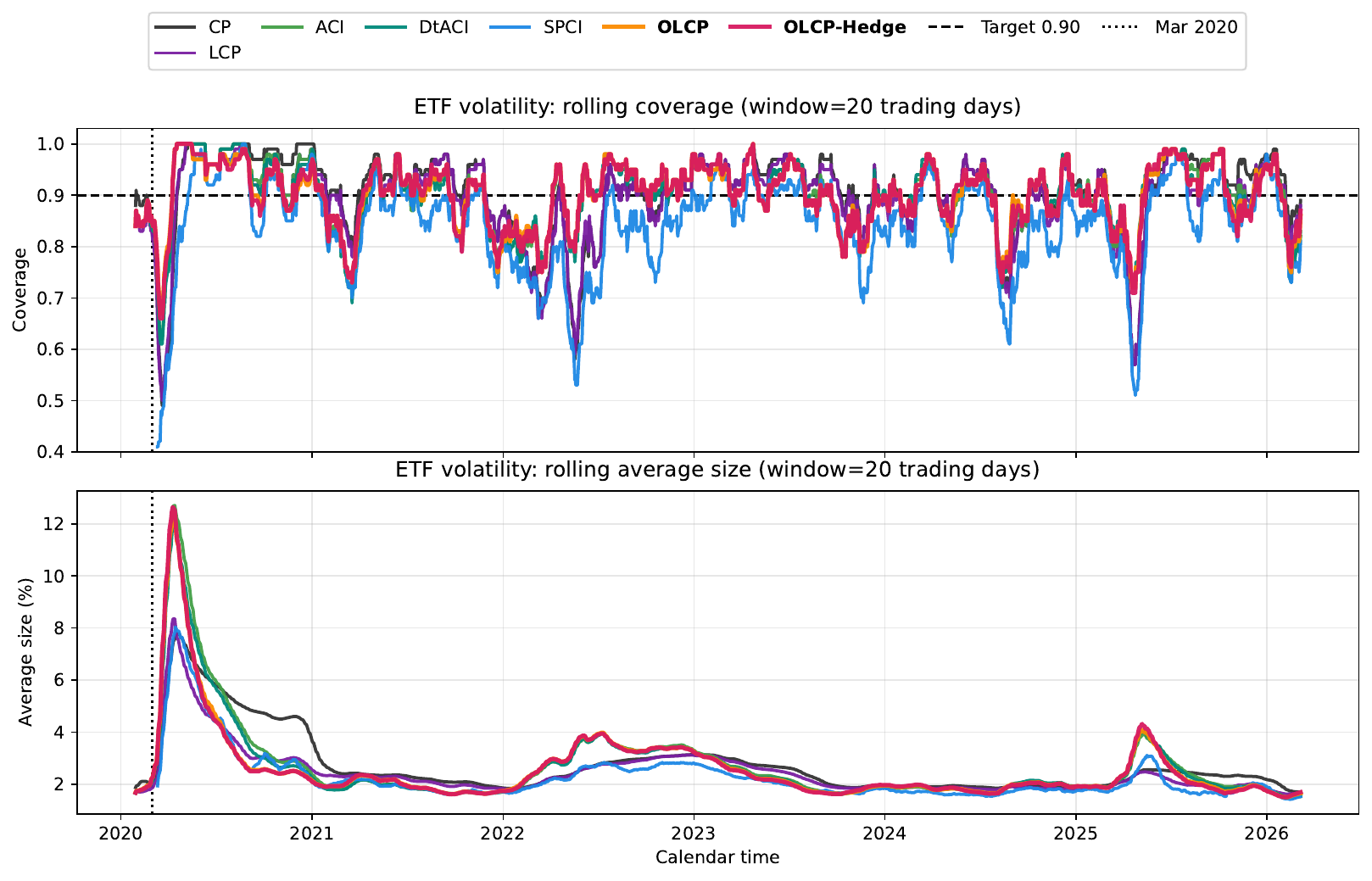}
  \caption{\textbf{ETF volatility rolling diagnostics.}
  Top: rolling coverage over daily test observations. Bottom: rolling average interval size, reported in
  percentage points of absolute log return. The dashed line marks target coverage \(0.90\).}
  \label{fig:etf-roll}
\end{figure}

Across the rolling diagnostics, SPCI is consistently much smaller but also persistently below the
coverage target, indicating that its residual autoregression is not sufficiently calibrated in these
nonstationary streams. ACI and DtACI tend to recover coverage by increasing sizes globally. OLCP
and OLCP-Hedge are more efficient: their rolling sizes are generally below the global adaptive
baselines while their rolling coverage remains close to the target, especially outside the most extreme
stress periods.

\paragraph{Volatility-regime diagnostics for ETF data.}
Table~\ref{tab:vix-regime} stratifies ETF volatility performance by the current VIX level. This
diagnostic evaluates whether methods adapt to market-volatility regimes rather than only achieving
marginal coverage.

\begin{table}[h]
\centering
\caption{\textbf{ETF volatility diagnostics by VIX regime.}
Low- and high-VIX regimes are defined as the bottom and top quartiles of VIX over the online test
period: \(\mathrm{VIX}\le 15.87\) and \(\mathrm{VIX}\ge 23.84\), respectively. Sizes are reported in
percentage points of absolute log return. \(N\) is the number of evaluated ETF-day prediction points
in each regime; SPCI has fewer low-VIX points because of its additional residual-lag warm-up.}
\label{tab:vix-regime}
\small
\setlength{\tabcolsep}{4.5pt}
\begin{tabular}{lccc ccc}
\toprule
& \multicolumn{3}{c}{Low VIX} & \multicolumn{3}{c}{High VIX} \\
\cmidrule(lr){2-4}\cmidrule(lr){5-7}
Method & Coverage & Size & \(N\) & Coverage & Size & \(N\) \\
\midrule
CP          & 0.925 & 2.044 & 2015 & 0.854 & 3.835 & 2015 \\
LCP         & 0.920 & 1.900 & 2015 & 0.846 & 3.356 & 2015 \\
ACI         & 0.913 & 1.851 & 2015 & 0.890 & 4.349 & 2015 \\
DtACI       & 0.915 & 1.854 & 2015 & 0.889 & 4.184 & 2015 \\
SPCI        & 0.862 & 1.693 & 1895 & 0.792 & 3.272 & 2015 \\
\midrule
OLCP        & 0.912 & 1.833 & 2015 & 0.890 & 4.004 & 2015 \\
OLCP-Hedge  & 0.913 & 1.824 & 2015 & 0.888 & 4.007 & 2015 \\
\bottomrule
\end{tabular}
\end{table}

The VIX-stratified table shows that high-volatility periods are substantially harder: all methods have
lower conditional coverage when VIX is high. Global adaptive methods improve high-VIX coverage
by inflating sizes, whereas OLCP and OLCP-Hedge achieve comparable or slightly better high-VIX
coverage with smaller sizes than ACI and DtACI. SPCI remains the narrowest method but undercovers
severely in both regimes. This supports the main empirical conclusion: localization improves
efficiency, online calibration helps maintain validity, and their combination gives the best overall
coverage--size tradeoff.

\end{document}